\newtheorem{theorem}{Theorem}[section]
\newcommand*{\affaddr}[1]{#1} 
\newcommand*{\affmark}[1][*]{\textsuperscript{#1}}
\DeclareSymbolFont{extraup}{U}{zavm}{m}{n}
\DeclareMathSymbol{\varheart}{\mathalpha}{extraup}{86}
\title{Decomposed Prompt Tuning via Low-Rank Reparameterization}
\author{%
Yao Xiao\affmark[1],
Lu Xu\affmark[1], Jiaxi Li\affmark[1], Wei Lu\affmark[1], Xiaoli Li\affmark[2]\\
\affaddr{\affmark[1]Singapore University of Technology and Design}\\
\affaddr{\affmark[2]Institute for Infocomm Research, A*Star, Singapore}\\
\tt{\{xiao\_yao, xu\_lu, jiaxi\_li\}@mymail.sutd.edu.sg}\\
\tt{luwei@sutd.edu.sg, xlli@i2r.a-star.edu.sg}\\
}
\renewcommand\footnotemark{}
\date{}
\begin{document}
\maketitle
\begin{abstract}


While prompt tuning approaches have achieved competitive performance with high efficiency, 
we observe that they invariably employ the same initialization process, wherein the soft prompt is either randomly initialized or derived from an existing embedding vocabulary.
In contrast to these conventional methods, this study aims to investigate an alternative way to derive soft prompt.
Our empirical studies show that the soft prompt typically exhibits a low ``intrinsic rank'' characteristic. 
With such observations, we propose
{\em decomposed prompt tuning}, a novel approach that utilizes low-rank matrices to initialize the soft prompt.
Through the low-rank reparameterization, our method significantly reduces the number of trainable parameters while maintaining effectiveness.
Experimental results on the SuperGLUE benchmark in both high-resource and low-resource scenarios demonstrate the effectiveness of the proposed method.\footnote{Our code is available at \url{https://github.com/XYaoooo/DPT}.}


\end{abstract}

\section{Introduction}
\label{sec:intro}
Pre-trained language models \cite{peters-etal-2018-deep,radford2019language,devlin-etal-2019-bert, liu2020roberta, JMLR:v21:20-074} have achieved remarkable performance on various natural language understanding and generation tasks. 
The \emph{pretrain-then-finetune} paradigm 
has been adopted as a common approach to deal with downstream tasks. 
However, such a paradigm is often considered inefficient, especially in the era of large language models (LLMs), as it requires tuning a large number of model parameters and saving a separate model for each task.


Recently, parameter-efficient fine-tuning (PEFT) approaches \cite{pmlr-v97-houlsby19a,mahabadi2021compacter, liu-etal-2022-p, NEURIPS2022_0cde695b,vu-etal-2022-spot,asai-etal-2022-attempt,wang-etal-2022-adamix,wang2023multitask} have been proposed to address this challenge. 
The main idea of these methods is to fine-tune only a subset of the model's parameters or additionally introduced parameters while freezing the majority of parameters of a pre-trained model. 
These approaches require saving only the trainable parameters for different tasks, striking a balance between performance and efficiency.


\begin{figure}[t] 
\centering
\includegraphics[width=0.9\linewidth]{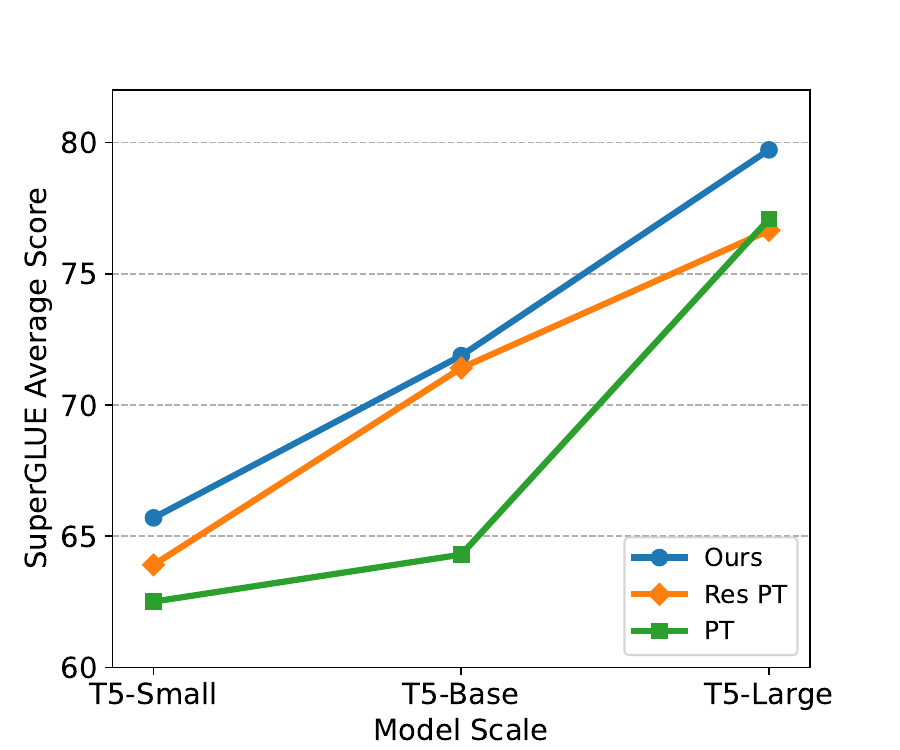}
\caption{Average performance over all the datasets of SuperGLUE with T5 models. The number of trainable parameters used are 11.2K, 102K, and 925K with T5-Large for DPT, vanilla prompt tuning (PT) \cite{lester-etal-2021-power}, and Residual PT (Res PT) \cite{razdaibiedina2023residual}, respectively. More details are included in Section~\ref{sec:results}.}
\label{ave_per}

\end{figure}

The success of PEFT methods is also aligned with the previous findings that pre-trained models possess a low ``intrinsic rank'' \cite{li-id-2018-ICLR, Aghajanyan2020IntrinsicDE}. 
\citet{Aghajanyan2020IntrinsicDE} empirically showed that
employing a low-dimensional reparameterization is equally effective as full-model fine-tuning.
\citet{hu2022lora} further demonstrated that weight updates during model training also exhibit a low ``intrinsic rank''. 
By only tuning the proposed low-rank adaptation module, their approach achieves high efficiency while maintaining competitive performance. 

Another line of research work focuses on P*-tuning~\cite{li-liang-2021-prefix, liu2021gpt, qin-eisner-2021, lester-etal-2021-power}. 
Specifically, \citet{li-liang-2021-prefix} proposed prefix tuning
by prepending a sequence of virtual tokens to each transformer layer, and it updates the representations of these virtual tokens while keeping the pre-trained model frozen.
Prompt tuning \cite{lester-etal-2021-power} further simplified prefix tuning by updating only a sequence of continuous prompt tokens in the embedding layer. 

Owing to its simplicity, subsequent studies \citep{ma-etal-2022-xprompt, razdaibiedina2023residual} continuously improve the vanilla prompt tuning approach.
Despite the substantial achievements of these prompt tuning methods, we observe that they invariably employ the same initialization process, wherein the soft prompt is either randomly initialized or derived from the existing embedding vocabulary. 
Different from the previous approaches, this paper aims to explore an alternative approach to deriving soft prompt. 

Our motivation stems from the observations that weight updates in the transformer layer during training have a low ``intrinsic rank'', as highlighted by~\citet{hu2022lora}.
This leads us to inquire whether soft prompt also have a similar low ``intrinsic rank'' pattern.
To this end, we conduct studies examining the ``intrinsic rank'' of soft prompt, and we include the details of the analysis in Section~\ref{sec:priliminary}. 
Based on the studies, we find that the soft prompt indeed tends to exhibit a low ``intrinsic rank'' behavior.
Armed with this insight, we propose 
{\em decomposed prompt tuning} (DPT), 
a novel approach that employs low-rank matrices to initialize the soft prompt.
Specifically, we decompose the original soft prompt into the product of two compact matrices, 
and we include the detailed description in Section~\ref{sec:ours}.
With this low-rank reparameterization, DPT
significantly reduces the number of trainable parameters while achieving strong performance.
A comparison with previous approaches, depicted in Figure \ref{ave_per}, verifies the efficacy of our method.


Our contribution can be summarized as follows:
\begin{itemize}
    \item We present an empirical study to show that the soft prompt exhibits a low ``intrinsic rank'' characteristic. 
    \item Motivated by such findings, we propose our method to initialize the soft prompt with low-rank matrices.
    Experimental results on the SuperGLUE benchmark in both high-resource and low-resource scenarios demonstrate the effectiveness of our proposed approach. 
    \item Compared with the vanilla prompt tuning approach, our method requires significantly fewer trainable parameters while achieving strong performance (i.e., 11.2K vs. 102K with T5-Large).
\end{itemize}



     

\section{Emergence of ``Intrinsic Rank''}
\label{sec:priliminary}
To investigate the rank of soft prompt, we design an analysis based on the vanilla prompt tuning approach on the sub-tasks within the SuperGLUE benchmark.  
Specifically, we re-design the soft prompt, which facilitates a probe into its rank.

\subsection{Prompt Tuning}
\label{sec:ptuning} 
We employ the vanilla prompt tuning \cite{lester-etal-2021-power} approach to conduct our analysis, as illustrated on the left side of Figure \ref{model-illustration}.
By considering the classification task as a conditional generation task, prompt tuning models the probability as:
\begin{equation}
    \Pr\textsubscript{$\theta ; \theta_P$} (Y|[P; X])
\end{equation}
where the input is the concatenation of the prepended prompt
$P$ of length $c$ and the original input $X$ of length $n$.
After the embedding layer, the representations for the prompt and the original input are
$P_{emb} \in \mathbb{R}^{e \times c} $ and  $X_{emb} \in \mathbb{R}^{e \times n}$, respectively, and $e$ is the embedding dimension of the pre-trained model.
$Y$ denotes a sequence of output tokens, which also serves as the class label for the input\footnote{$Y$ can be a single token or a sequence of tokens.}.
$\theta$ indicates the frozen parameters of the pre-trained model, while $\theta_P$ represents the trainable parameter corresponding to the prompt $P$. 

\begin{figure*}[t] 
\centering
\includegraphics[width=0.9\textwidth]{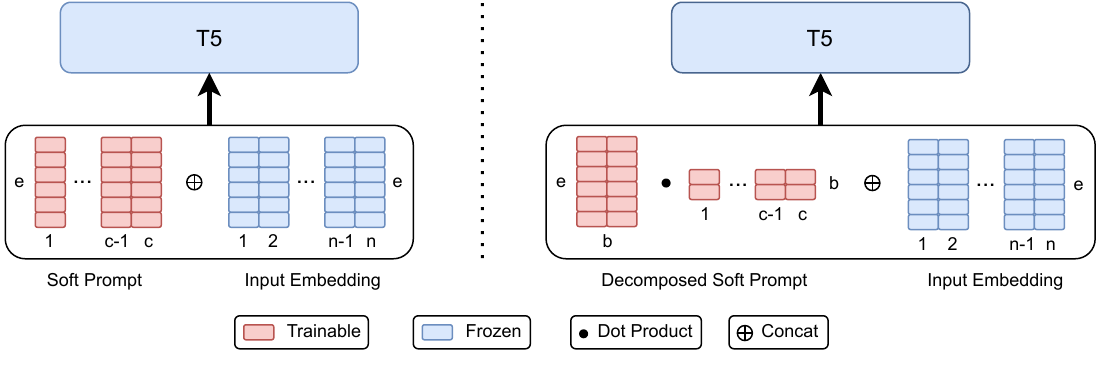}

\caption{Left: vanilla prompt tuning, Right: our proposed DPT.
The soft prompt  $P_{emb}  \in \mathbb{R}^{e \times c}$ (left) can be decomposed into two matrices, $A \in \mathbb{R}^{e \times b}$ and $B \in \mathbb{R}^{b \times c}$. By setting $b \ll min(c, e)$, the number of trainable parameters of our DPT ($eb+bc$) is much smaller than that of vanilla prompt tuning ($ec$).}
\label{model-illustration}

\end{figure*}


\subsection{Low-Rank Behavior of Soft Prompt}
As highlighted earlier, the soft prompt $P_{emb} \in \mathbb{R}^{e \times c}$ comprises the only tunable parameters during training. Typically, such a weight matrix in a neural network has a full rank both before and after model training. 
Therefore, to investigate whether the prompt matrix $P_{emb}$ has a low ``intrinsic rank'', we need to reparameterize the soft prompt.

Consider an example matrix $M \in \mathbb{R}^{m \times n}$, the singular value decomposition of the matrix $M$ is in the form of $M = U \Sigma V$,
where $U \in \mathbb{R}^{m \times m}$, $V \in \mathbb{R}^{n \times n}$, and $\Sigma \in \mathbb{R}^{m \times n}$ is a diagonal matrix.
Note that the diagonal entries of $\Sigma$ represent the singular values of $M$, and the number of non-zero singular values is equal to the rank of $M$.
With the above decomposition, we can first reparameterize the soft prompt $P_{emb} \in \mathbb{R}^{e \times c}$ as:
\begin{equation}
    P_{emb}=U \Sigma V
\end{equation}

Here, the matrices $U\in \mathbb{R}^{e \times e}$ and $V\in \mathbb{R}^{c \times c}$ are randomly initialized and set as trainable parameters in this analysis.
$\Sigma \in \mathbb{R}^{e \times c}$ is initialized with positive values along the diagonal, while the off-diagonal elements are set as 0.
We further impose constraints on $\Sigma$ such that only the diagonal entries are trainable while the remaining are frozen. 
Intuitively, if the soft prompt $P_{emb}$ has a low ``intrinsic rank'', then some of the diagonal entries of $\Sigma$ will converge to 0 during training.
We provide an explanation of why we can use the rank of $\Sigma$ as an approximation for the rank of $P_{emb}$ in Appendix~\ref{app:rank_theo}. 






However, due to the dense nature of the neural network, the diagonal entries of $\Sigma$ can hardly be updated to exact 0.
To resolve this challenge, we apply a rectified linear unit (ReLU) to the diagonal matrix $\Sigma$:

\begin{equation}
    P_{emb}=U \mathtt{ReLU}(\Sigma) V
\end{equation}
Through such a reparameterization of the soft prompt $P_{emb}$, we can count the number of positive entries in the rectified diagonal matrix $\Sigma$ post-training to approximately investigate the rank of the matrix $P_{emb}$. 
We can infer that if the soft prompt has a low ``intrinsic rank'', more diagonal entries of $\Sigma \in \mathbb{R}^{e \times c}$ will be updated to negative values.

We conduct our empirical experiments with the T5-base model on the CB dataset \cite{de2019commitmentbank} of SuperGLUE benchmark, and Figure~\ref{fig:preliminary} shows the results. 
We observe that the number of positive diagonal entries decreases while the number of negative diagonal entries increases as training progresses.
We interpret these observations as indicative that the soft prompt has a low ``intrinsic rank''.
Note that the soft prompt length $c$ is set as the commonly used value of 100 in this analysis, this accounts for the initial count of 100 positive diagonal entries.
We also observe similar behavior on other datasets with models of different sizes, with additional details provided in Appendix~\ref{app:rank_change}.

\begin{figure}[t]
    \centering
    \includegraphics[width=.7\linewidth]{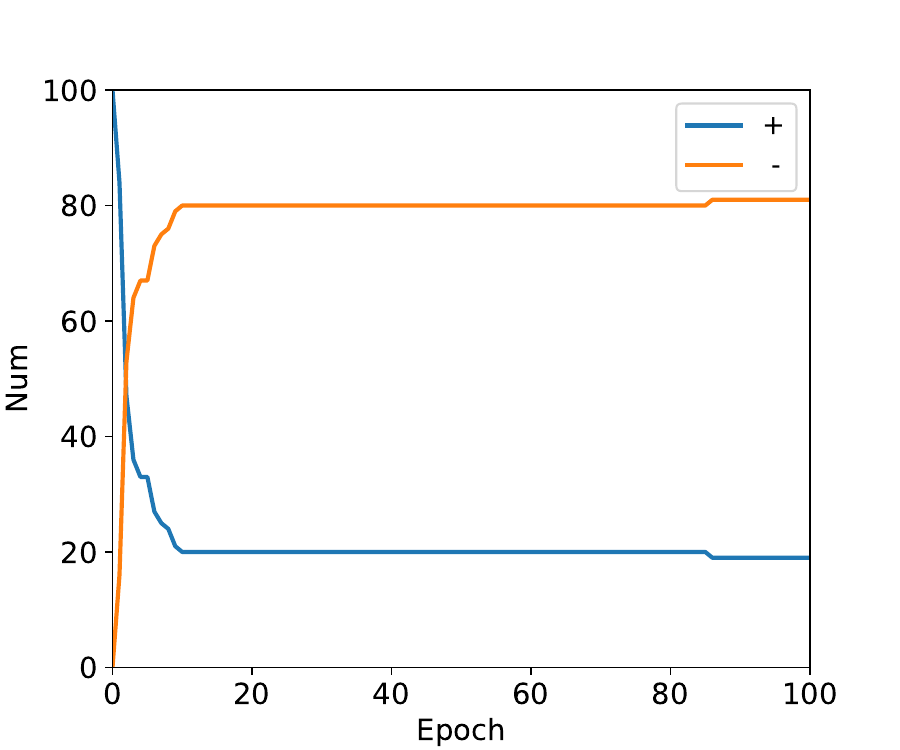}
    
    \caption{Number of positive and negative diagonal entries in $\Sigma$.}
    \label{fig:preliminary}
    
\end{figure}

\section{Method} \label{method}
\label{sec:ours}
Motivated by the observation of the soft prompt's low ``intrinsic rank'' behavior, as discussed in the previous section, we propose a parameterization of the soft prompt that explicitly constrains it to be of low rank.
In this section, we provide the details of our approach, as illustrated on the right side of Figure \ref{model-illustration}.







\subsection{Decomposition of Soft Prompt}
\label{sec:decomposed}
As described in Section \ref{sec:ptuning}, the input of the vanilla prompt tuning approach is the concatenation of the prompt $P$ and the input text $X$.
The overall input to the pre-trained model is then as follows:
\begin{equation}
    \left\{p_1, p_2, \ldots, p_c, x_1, x_2, \ldots, x_n \right\}
\end{equation}
where there are $c$ soft prompt tokens and $n$ text tokens.
The focus of our approach is a more efficient representation of the soft prompt $P_{emb}$, which usually has dimension ${e \times c}$.

Instead of using a random initialization for the soft prompt, we decompose it into the product of two matrices: 
\begin{equation}
\label{eq:repara}
    P_{emb} = A B
\end{equation}
where $A \in \mathbb{R}^{e \times b}$ and $B \in \mathbb{R}^{b \times c}$, as illustrated on the right side of Figure \ref{model-illustration}.
One of the main advantages of this representation is that it offers us the ability to modulate the rank of the soft prompt $P_{emb}$ by controlling the size of the intermediary dimension $b$, which we term as the ``bottleneck''. Specifically, by setting $b$ to a relatively small value, the resulting soft prompt $P_{emb}$ inherently possesses a low rank.
Moreover, this decomposition approach affords a reduction in the number of trainable parameters by compressing the bottleneck.
Specifically, there are $ec$ trainable parameters in the vanilla prompt tuning approach. 
With our proposed DPT approach, the number of trainable parameters is $eb+bc$.
When setting $b\ll min(c, e)$, the total number of trainable parameters can be significantly reduced. 
Note that $c$ is usually set to 100 and $e$ is 1024 for T5-Large \cite{JMLR:v21:20-074}, and we set $b$ as 10 in our main experiments. A more detailed analysis of bottleneck $b$ can be found in Section~\ref{ana:bottleneck}.

\subsection{Training and Inference}
We adopt similar training procedures as discussed in Section \ref{sec:ptuning}. Our training objective is as follows:
\begin{equation}
    \Pr\textsubscript{$\theta ; \theta_P$} (Y|[P; X])
\end{equation}
where $P$ indicates the prompt with a length $c$ tokens,  $\theta$ denotes the frozen parameters of the pre-trained model, and $\theta_P$ is the trainable parameter associated with the prompt $P$. 
In our approach, the representation of the soft prompt $P_{emb}$, is derived as the product of two matrices (Eq. \ref{eq:repara}), $A$ and $B$, which are both initialized randomly and serve as the tunable parameters of our model. $\theta_P$ corresponds to the parameters contained in matrices $A$ and $B$.
After the training is completed, we can store the product of $A$ and $B$ for inference. 




\section{Experimental Setup}
\subsection{Datasets}
To assess the effectiveness of our proposed DPT method, we conduct experiments on eight datasets from the SuperGLUE benchmark \cite{NEURIPS2019_4496bf24} under both high-resource and low-resource conditions. 
We follow previous work \cite{lester-etal-2021-power,vu-etal-2022-spot} to report the performance of our model on the validation sets due to the restricted access to the test sets of SuperGLUE. 
We provide detailed descriptions, evaluation metrics, and statistics of the SuperGLUE benchmark in Appendix \ref{stat}. 


\begin{table*}[t]
		\centering
		\resizebox{1\textwidth}{!}{
			\begin{tabular}{lrcccccccc>{\columncolor[gray]{0.8}}c}
				\toprule
                \multirow{2}{*}{\textbf{Model}} & \textbf{\# Trainable} & \textbf{WSC} & \textbf{WiC} & \textbf{BoolQ} & \textbf{CB}& \textbf{COPA} & \textbf{MultiRC} & \textbf{ReCoRD} & \textbf{RTE} & \\ 
                &  \textbf{Params.}  & Acc. & Acc. & Acc. & Acc. & Acc. & F1 & F1 & Acc. &\multirow{-2}{*}{\textbf{Avg.}}
                \\
                \midrule
                \multicolumn{11}{c}{\texttt{T5-Small} }\\
                \emph{Fine-Tuning}   & 60M & 67.94 & 68.18 & 77.06 & 89.28 & 59.00 &  66.98  & 55.64 & 72.56 & 69.58 \\ 
                Prompt Tuning \cite{lester-etal-2021-power} & 51K & 63.14 & 59.29 & 66.78 &  74.99 & \textbf{58.33} &  \textbf{64.89} & 52.75 & 59.92 & 62.51 \\ 
                Residual PT \cite{razdaibiedina2023residual} & 462K & 63.14 & 60.96 & \textbf{73.35} &  72.02 & 56.66 &  65.12  & 53.08 & 67.02 & 63.91 \\ 
                Ours   & 6K & \textbf{63.78} & \textbf{64.31} & 71.74 &  \textbf{79.16} & 58.00 & \textbf{64.89}  & \textbf{53.27} & \textbf{70.51} & \textbf{65.70}\\ 
				\midrule
                    \multicolumn{11}{c}{\texttt{T5-Base}}\\
                \emph{Fine-Tuning}$^*$   & 220M & 81.70 & 69.30 & 82.30 & 91.70 & 60.00 & 76.90 & 80.90 & 84.50 & 78.41\\

			Prompt Tuning \cite{lester-etal-2021-power} & 77K & 64.74 & 59.97 & 62.18 &  70.23 & 56.33 &  \textbf{72.69}  & 71.84 & 56.43 & 64.30\\
                Residual PT \cite{razdaibiedina2023residual} & 693K & \textbf{67.94} & 63.31 & \textbf{80.00} &  77.37 & \textbf{56.66} &  72.11  & 72.21 & 81.70 & 71.41\\ 
                Ours   & 9K & 67.30 & \textbf{68.49} & 78.64 &  \textbf{78.56} & \textbf{56.66} &  71.22  & \textbf{72.53} & \textbf{81.94} & \textbf{71.88}\\ 
				\midrule
                    \multicolumn{11}{c}{\texttt{T5-Large}}\\
                    \emph{Fine-Tuning}$^*$    & 770M & 88.50 & 73.50 & 88.30 & 94.30 &  87.00 &  85.40  & 89.20 & 90.60 & 87.10\\
				Prompt Tuning \cite{lester-etal-2021-power} & 102K & 76.52 & 70.00 & 84.09 &  74.40 & 62.00 &  76.18  & \textbf{84.51} & 88.95 & 77.08\\ 
                    Residual PT \cite{razdaibiedina2023residual}  & 925K & 70.50 & \textbf{72.25} & \textbf{85.04} &  73.21 & \textbf{62.66} &  76.46  & 84.36 & 88.92 & 76.67\\ 
                    Ours   & 11K & \textbf{79.16} & 71.99 & 84.76 &  \textbf{88.09} & 62.33 &  \textbf{76.72}  & 84.46 & \textbf{90.25} & \textbf{79.72}\\ 
				\bottomrule
			\end{tabular}
		}
  
		\caption{Results on the SuperGLUE validation set. All scores are the average over 3 runs with different random seeds. The last column (Avg.) indicates the average score across all datasets in the SuperGLUE benchmark. The models with $^*$ symbol denote that the results are retrieved from \citet{aribandi2022ext}, and the rest are reproduced by following the original implementations. Standard deviation of three runs is in Appendix \ref{std}. }
		\label{main-result}
  
\end{table*}

\subsection{Settings}
\label{exp_setting}
We use the encoder-decoder T5 model \cite{JMLR:v21:20-074} as the backbone for our experiments. 
We employ three variants of the T5 model: T5-Small, T5-Base, and T5-Large, which comprise 60M, 220M, and 770M parameters respectively.
We implement our approach with the HuggingFace Transformers library \cite{wolf-etal-2020-transformers}. 
Each dataset within the SuperGLUE benchmark is converted into a text-to-text format for compatibility with the T5 model. The model is trained for 100 epochs with an initial learning rate of 0.3, and AdamW \cite{loshchilov2018decoupled} is employed as the optimizer. The length of the soft prompt $c$ is fixed at 100. 
The embedding dimension $e$ is configured as per the model variant, being set to 512, 768, and 1024 for T5-Small, T5-Base, and T5-Large respectively. Importantly, we set the bottleneck size $b$ to 10. This bottleneck plays an instrumental role in inducing low-rank constraints on the embedded soft prompt matrix $P_{emb}$.
We initialize the matrices $A$ and $B$ of our prompt parameters with Gaussian distribution.

\subsection{Baselines}
To maintain consistency in evaluation metrics and preprocessing procedures across all datasets, we reproduce most of the scores to ensure a fair comparison. More details about implementation can be found in Appendix \ref{detail}.

\paragraph{Fine-tuning} We compare DPT with the conventional fine-tuning approach of the T5 model\cite{JMLR:v21:20-074}, where separate copies of the model must be tuned and stored for different datasets. Though fine-tuning is not a parameter-efficient approach, it is usually treated as an upper bound of performance. 

\paragraph{Prompt Tuning} Prompt tuning (PT) \cite{lester-etal-2021-power} prepends a soft prompt to the input embedding. It adapts to downstream tasks by exclusively updating the parameters of the soft prompt, keeping the language model frozen.



\paragraph{Residual Prompt Tuning} Residual prompt tuning (Residual PT) \cite{razdaibiedina2023residual} is a recently proposed variant of PT. 
It enhances the original prompt embeddings through an additional layer of shallow network instead of passing them directly into the frozen transformer layer.
A residual connection is also employed to boost the performance and convergence rate.




Our proposed model is not directly comparable with the multi-stage XPrompt \cite{ma-etal-2022-xprompt}, which iteratively updates the soft prompt embedding. Adapter-based methods \cite{pmlr-v97-houlsby19a, ruckle-etal-2021-adapterdrop,guo-etal-2021-parameter} are also not included for comparison in this paper as they require efforts to modify the transformer layers. Furthermore, they often include a considerably larger number of parameters than prompt tuning approaches as mentioned in \citet{razdaibiedina2023residual} \footnote{More detailed comparisons can be found in Section \ref{sec:relatedwork}.}.

\section{Results}

\begin{figure*}[!t]
\centering{
    \subfloat[]{\label{ave_few}
    \includegraphics[width=0.21\linewidth]{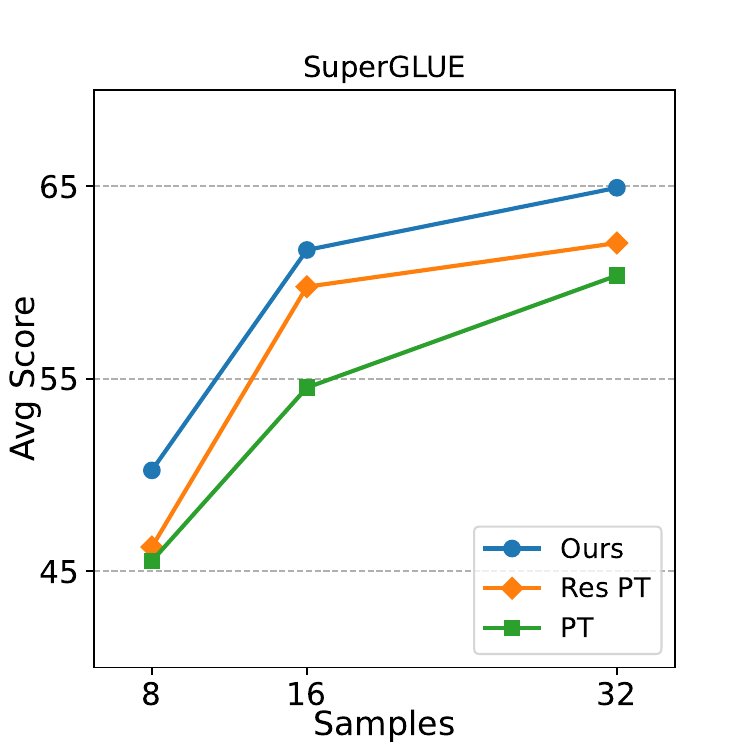}}
    \hspace{-0.48in}
    \hfill
    \subfloat[]{\label{wicfew}
    \includegraphics[width=0.21\linewidth]{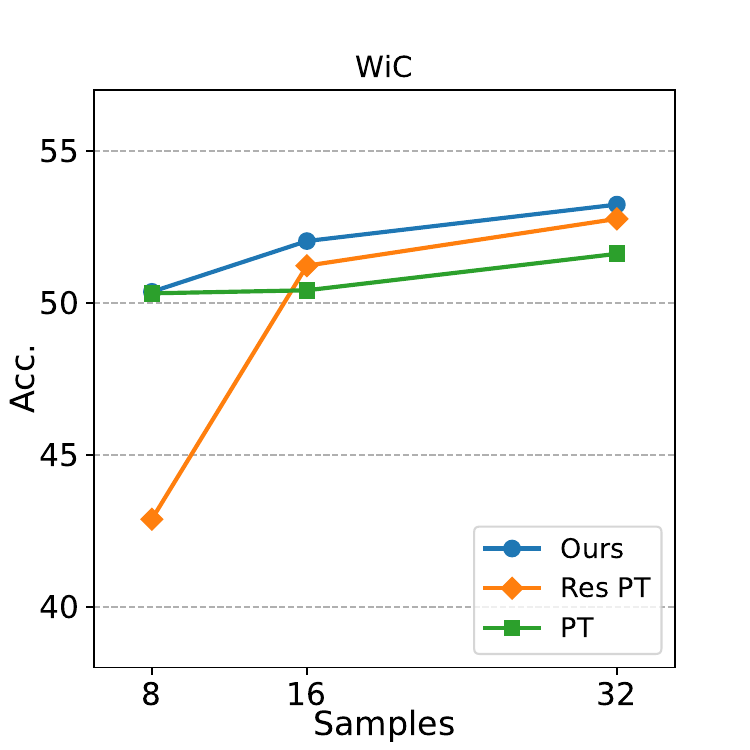}}
    \hspace{-0.48in}
    \hfill
    \subfloat[]{\label{cbfew}
    \includegraphics[width=0.21\linewidth]{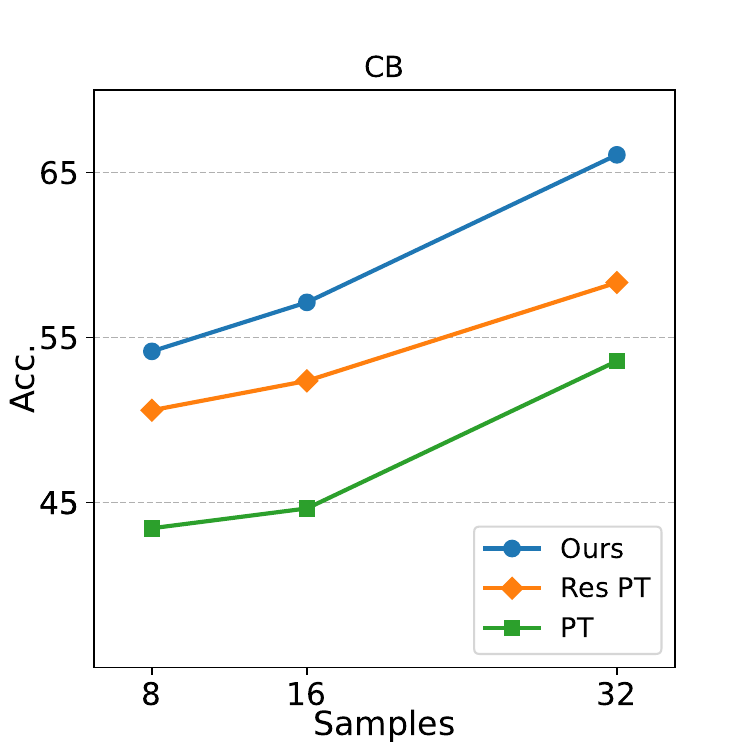}}
    \hspace{-0.48in}
    \hfill
    \subfloat[]{\label{rtefew}
    \includegraphics[width=0.21\linewidth]{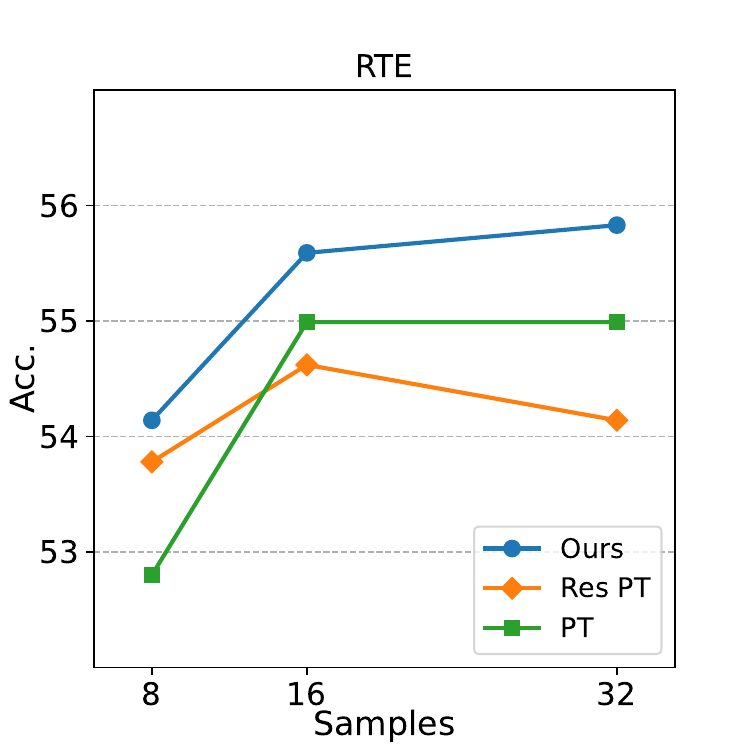}}
    \hspace{-0.48in}
    \hfill
    \subfloat[]{\label{copafew}
    \includegraphics[width=0.21\linewidth]{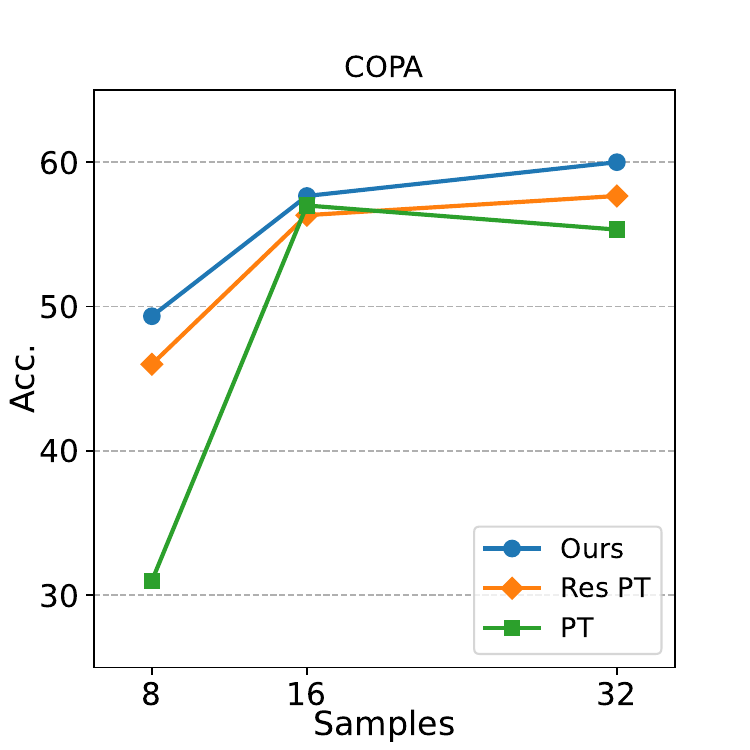}}
    }
    
    \caption{Comparison of DPT with vanilla prompt tuning (PT) and Residual PT (Res PT) in the few-shot setting. }
   
\end{figure*}

\begin{figure*}[t]
\centering{
    \subfloat[]{\label{}
    \includegraphics[width=0.24\linewidth]{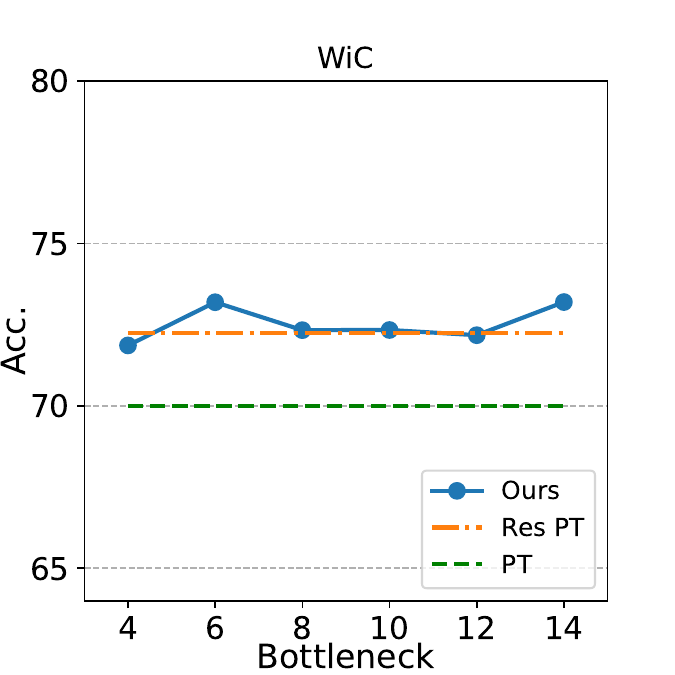}}
    \hfill
    \subfloat[]{\label{}
    \includegraphics[width=0.24\linewidth]{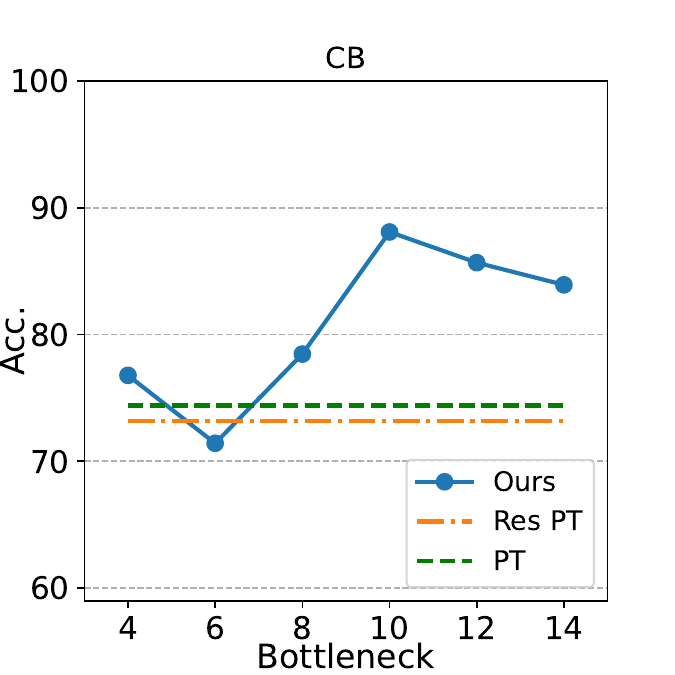}} 
    \hfill
    \subfloat[]{\label{}
    \includegraphics[width=0.24\linewidth]{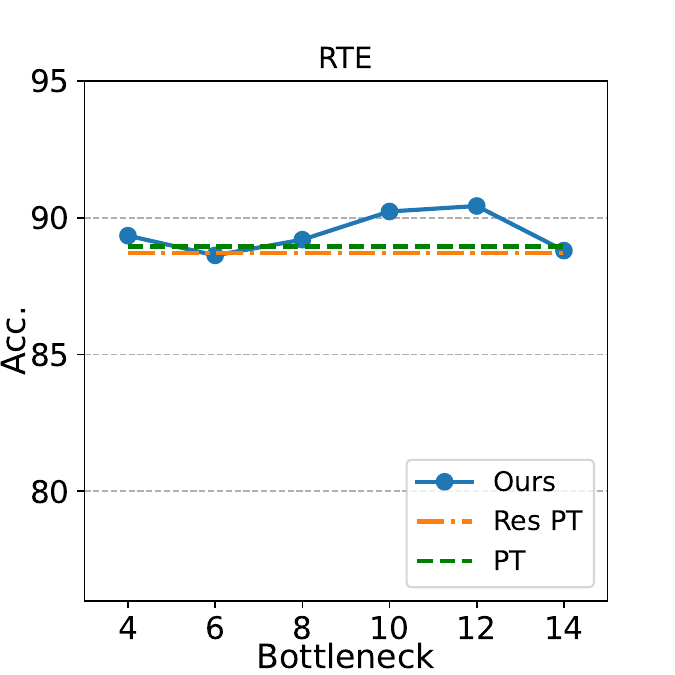}}
    \hfill
    \subfloat[]{\label{}
    \includegraphics[width=0.24\linewidth]{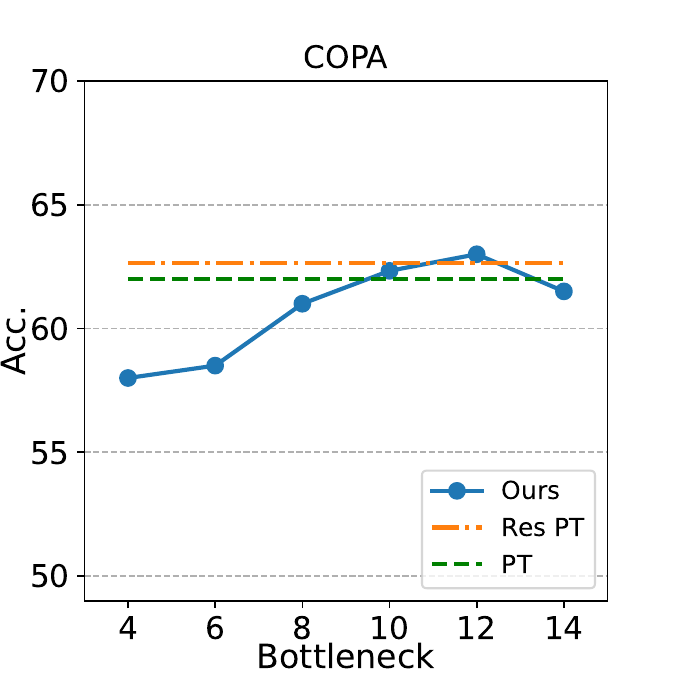}}
    }
    
    \caption{Comparisons of DPT with PT and Res PT with different sizes of the bottleneck $b$. Each point represents the average of three runs with different random seeds. }
    
    \label{b_try}
\end{figure*}

\subsection{Main Results}
\label{sec:results}
Table \ref{main-result} presents a comparison of our DPT with the vanilla prompt tuning \cite{lester-etal-2021-power} and residual prompt tuning \cite{razdaibiedina2023residual} across 8 datasets from the SuperGLUE benchmark.
Besides the model performance, we also include the comparison of the total number of trainable parameters.
Remarkably, our model consistently outperforms the aforementioned approaches across T5 models of different sizes in terms of the average scores over the 8 datasets.
Specifically, our model outperforms the vanilla prompt tuning by 3.19, 7.58, and 2.64 points in terms of the average score (Avg.) with T5-Small, T5-Base, and T5-Large models, respectively.
In most cases, our model surpasses the vanilla prompt tuning approach by a large margin.
Moreover, DPT is highly efficient in terms of parameter usage, requiring approximately one-ninth of the number of trainable parameters compared to vanilla prompt tuning (i.e., 11.2K vs. 102K for T5-Large).
When compared with the recently proposed residual prompt tuning, our model also shows better or comparable performance. 
Crucially, DPT achieves this while requiring significantly fewer trainable parameters. Specifically, residual prompt tuning consumes nearly 84 times more parameters than our method (i.e., 11.2K vs. 925K for T5-Large).
Note that we set the length of the soft prompt for residual prompt tuning to 100 which empirically yields better performance \footnote{There are 100-token and 10-token residual prompt tuning variants in the original paper \cite{razdaibiedina2023residual}.}.
We also provide the experimental results of residual prompt tuning when setting the length of the soft prompt to 10 in Appendix \ref{10token}.
The experimental results demonstrate the effectiveness of our proposed low-rank soft prompt in both performance and parameter efficiency.



\subsection{Few-shot Performance}


We further evaluate our method in a few-shot setting on all datasets in the SuperGLUE benchmark. Specifically, we sample total of 8, 16, and 32 training instances without balancing the label distribution to mimic a realistic scenario. 
To ensure a fair comparison, the sampled data for our method, vanilla prompt tuning, and residual prompt tuning are kept identical for each run. 
We employ the T5-Large model as the backbone for this evaluation as it achieves the strongest performance in the vanilla prompt tuning. 
The results presented are averaged across three separate runs with different random seeds. 
We compare our approach against the previous methods in terms of the average score over all datasets in the SuperGLUE benchmark, and the results are shown in Figure~\ref{ave_few}. 
We observe that our method consistently outperforms both vanilla prompt tuning and residual prompt tuning in our experiments.
Additionally, We also include experimental results on specific datasets from SuperGLUE in Figure \ref{wicfew}, \ref{cbfew}, \ref{rtefew}, and \ref{copafew}. 
These results further demonstrate the effectiveness of our method over the baselines in the few-shot setting.

\section{Analysis}

\begin{figure*}[ht]
\centering{
    \subfloat[]{\label{length_a}
    \includegraphics[width=0.24\linewidth]{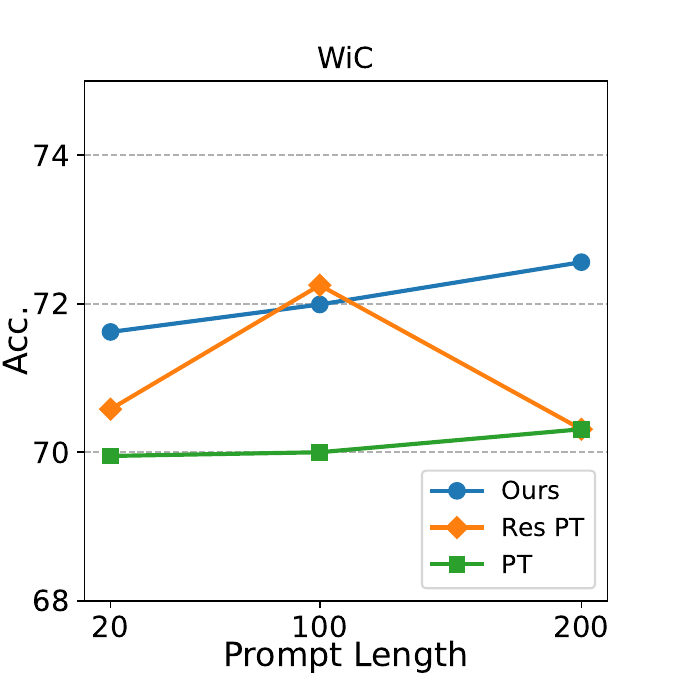}}
    \hfill
    \subfloat[]{\label{length_b}
    \includegraphics[width=0.24\linewidth]{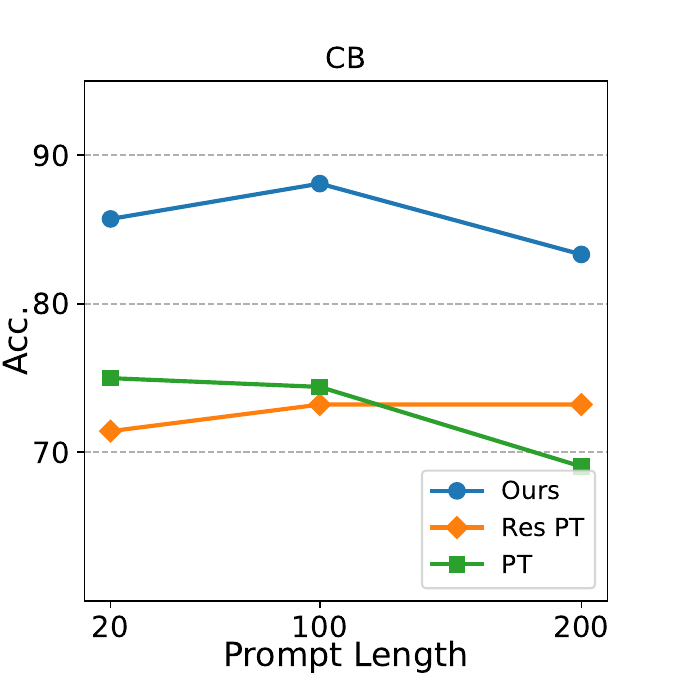}}
    \hfill
    \subfloat[]{\label{length_c}
    \includegraphics[width=0.24\linewidth]{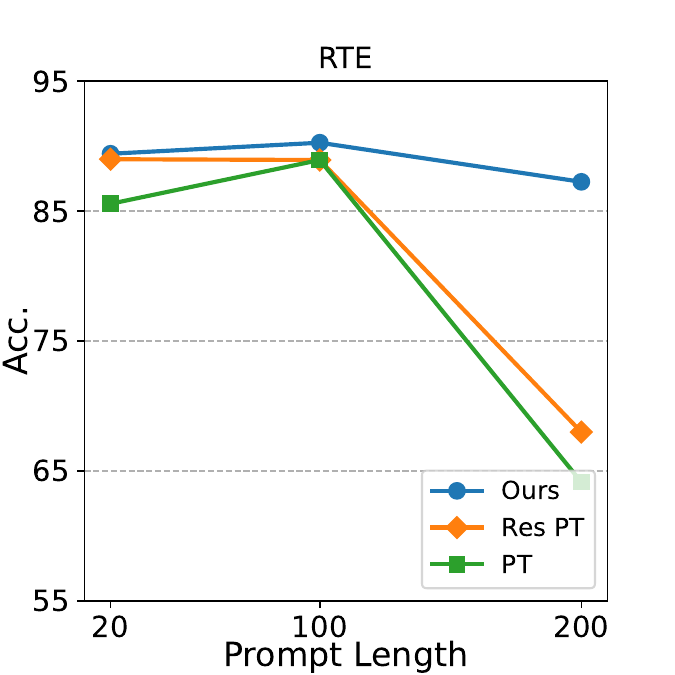}}
    \hfill
    \subfloat[]{\label{length_d}
    \includegraphics[width=0.24\linewidth]{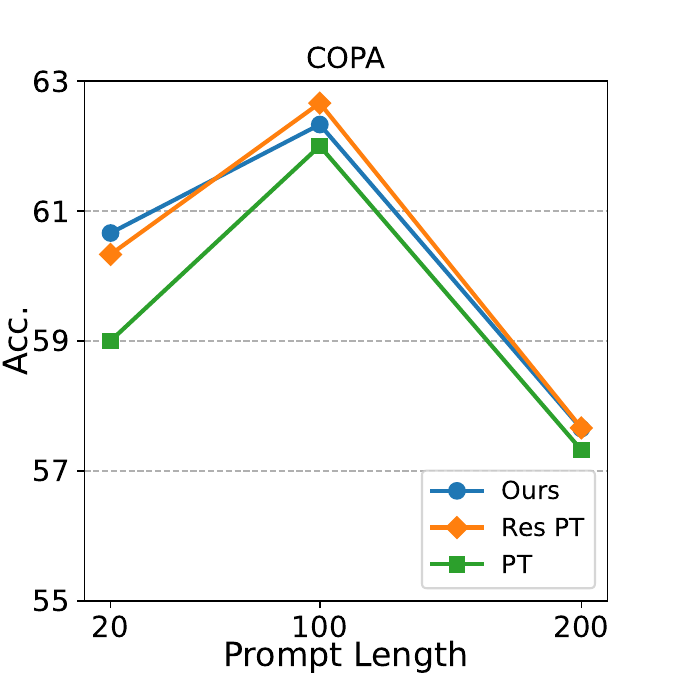}}
    }
    
    \caption{Comparisons of our method with PT and Res PT with different prompt lengths $c \in \{20, 100, 200\}$. Each point is the average of three runs with different random seeds.}
    
    \label{length}
\end{figure*}

\subsection{Sensitivity of Bottleneck Size}
\label{ana:bottleneck}
As mentioned in Section \ref{sec:decomposed}, the rank of our soft prompt can be adjusted through the bottleneck $b$.
In this section, we investigate how the size of the bottleneck impacts the performance of DPT.
We conduct experiments using T5-Large on the WiC, CB, RTE, and COPA datasets, with bottleneck $b\in \{4, 6, 8, 10, 12, 14\}$.
The results are shown in Figure \ref{b_try}.
We can see that though our approach experiences performance fluctuations with different sizes of the bottleneck $b$, DPT outperforms the vanilla prompt tuning and residual prompt tuning in quite a few cases.
Intuitively, the size of the bottleneck plays a critical role in the expressiveness of our model.
We observe a performance drop when the size of the bottleneck is getting smaller.
We find similar behavior when T5-Base is used, and the results can be found in Appendix \ref{sensitivity}. 



\subsection{Effect of Prompt Length}
\label{ab1}

To investigate the effect of prompt length on the performance of our proposed method, we conduct an analysis on four of the SuperGLUE tasks\footnote{These four datasets are randomly selected for analysis.}, and the results are shown in Figure \ref{length}.
The prompt length $c$ is set as 20, 100, and 200. We use the T5-large model as the backbone, while the bottleneck variable $b$ is held constant at 10.
As mentioned in Section \ref{method}, the number of trainable parameters in the vanilla PT is $ec$, whereas for our DPT method, it is $eb+bc$.
The benefit of our approach in terms of parameter-efficiency is more pronounced as the prompt length increases\footnote{When $b$ and $e$ are fixed, the ratio $\frac{eb+bc}{ec}$ decreases as the prompt length $c$ increases. This implies that our method exhibits higher parameter efficiency with longer prompt lengths.}.
We observe that our DPT consistently outperforms the vanilla prompt tuning approach across various prompt lengths. 
Moreover, DPT is generally found to be comparable or superior to residual prompt tuning.
Corroborating the observations made by \citet{lester-etal-2021-power}, a long soft prompt does not necessarily improve the model performance.
While a prompt length of 100 has been empirically validated as an effective hyperparameter in previous work \cite{lester-etal-2021-power, razdaibiedina2023residual}, our analysis reveals that the performance gains can still be achieved with a prompt length of 200, as illustrated in Figure~\ref{length_a}.
Furthermore, our DPT is less sensitive to the change of prompt length on CB and RTE, as shown in Figure~\ref{length_b} and Figure~\ref{length_c}.
We observe similar behavior when switching to T5-Base as the backbone, 
and details are included in Appendix~\ref{fix_bot}.

\subsection{Constrained Bottleneck under Short Prompt}

A critical aspect we sought to address with DPT is the potential to sustain both parameter efficiency and competitive performance, even with a short prompt.
When the prompt length $c$ is reduced to values such as 6 or 10, DPT may not exhibit parameter efficiency if the bottleneck $b$ remains to be 10.
To explore the possibility of achieving parameter efficiency under such conditions, we compress the bottleneck to an extremely low value of 2. 
The experiments conducted on the CB and WiC tasks using the T5-Large model are illustrated in Figure~\ref{bar}.
Notably, even with the bottleneck compressed to such a small value, our method is still able to surpass baselines. 
This demonstrates that our method is able to effectively generalize to extremely short prompts. 
Similar results can be observed on T5-base in Appendix \ref{bar_base}.

\begin{figure}[t]
\centering{
    \subfloat[]{\label{small_cb}
    \includegraphics[width=0.48\linewidth]{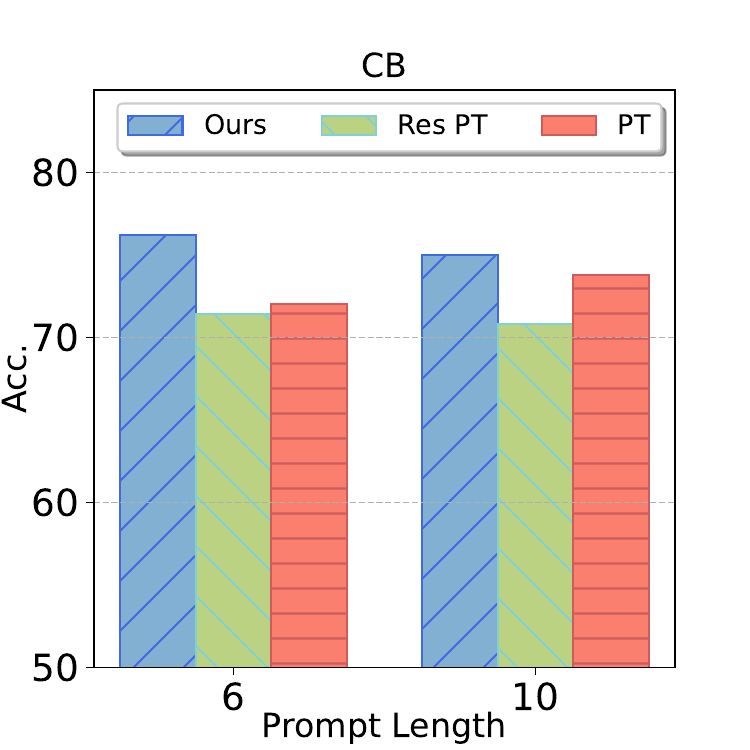}}
    \subfloat[]{\label{small_wic}
    \includegraphics[width=0.48\linewidth]{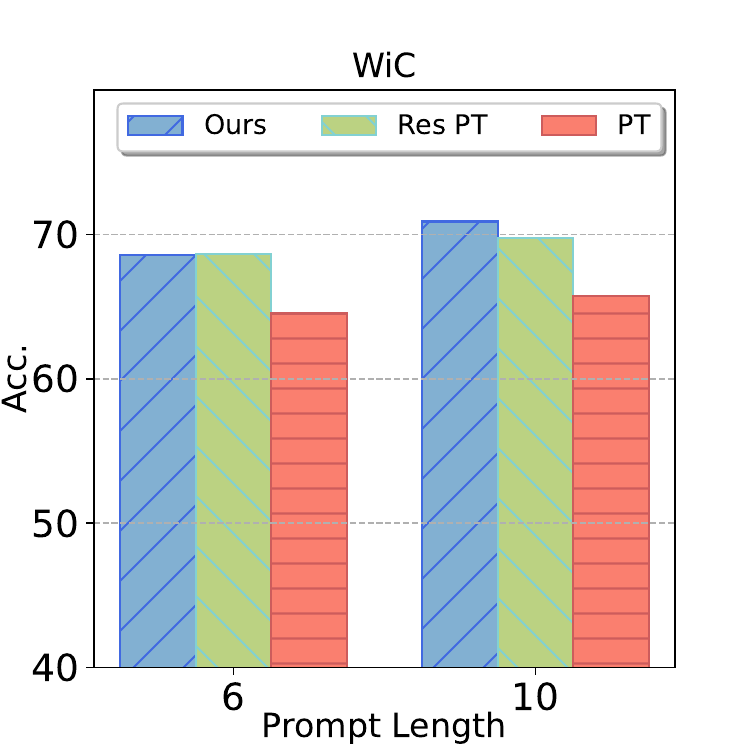}}
    }
    
    \caption{Further comparisons with short prompt lengths and small bottleneck.}
    
    \label{bar}
\end{figure}

\subsection{Bottleneck vs. Number of Trainable Parameters}

To further study the effect of bottleneck size without considering parameter efficiency, we conduct experiments with the bottleneck $b$ of varying magnitudes, specifically $b \in \{10, 1000, 10000\}$, while maintaining a fixed prompt length of 100 tokens. 
The experimental results are on the RTE dataset with the T5-Large model, and
we record the average, minimal, and maximum scores across three runs. 
Figure \ref{largeb} shows the comparison, and it is noteworthy that the number of trainable parameters involved are 11.2K, 1.1M, and 11.2M respectively for the varying bottleneck sizes.
When configuring the bottleneck $b$ to a large value, the soft prompt ceases to be a low-rank matrix and the number of trainable parameters is more than that of a vanilla prompt tuning approach.
More importantly, we observe that increasing the number of trainable parameters does not necessarily lead to an enhancement in performance.
Furthermore, overparameterization could deteriorate the stability of the performance.
We observe similar results with T5-Base model, and the details are included in Appendix \ref{unconstrained}. 


\begin{figure}[t] 
\centering
\includegraphics[width=0.6\linewidth]{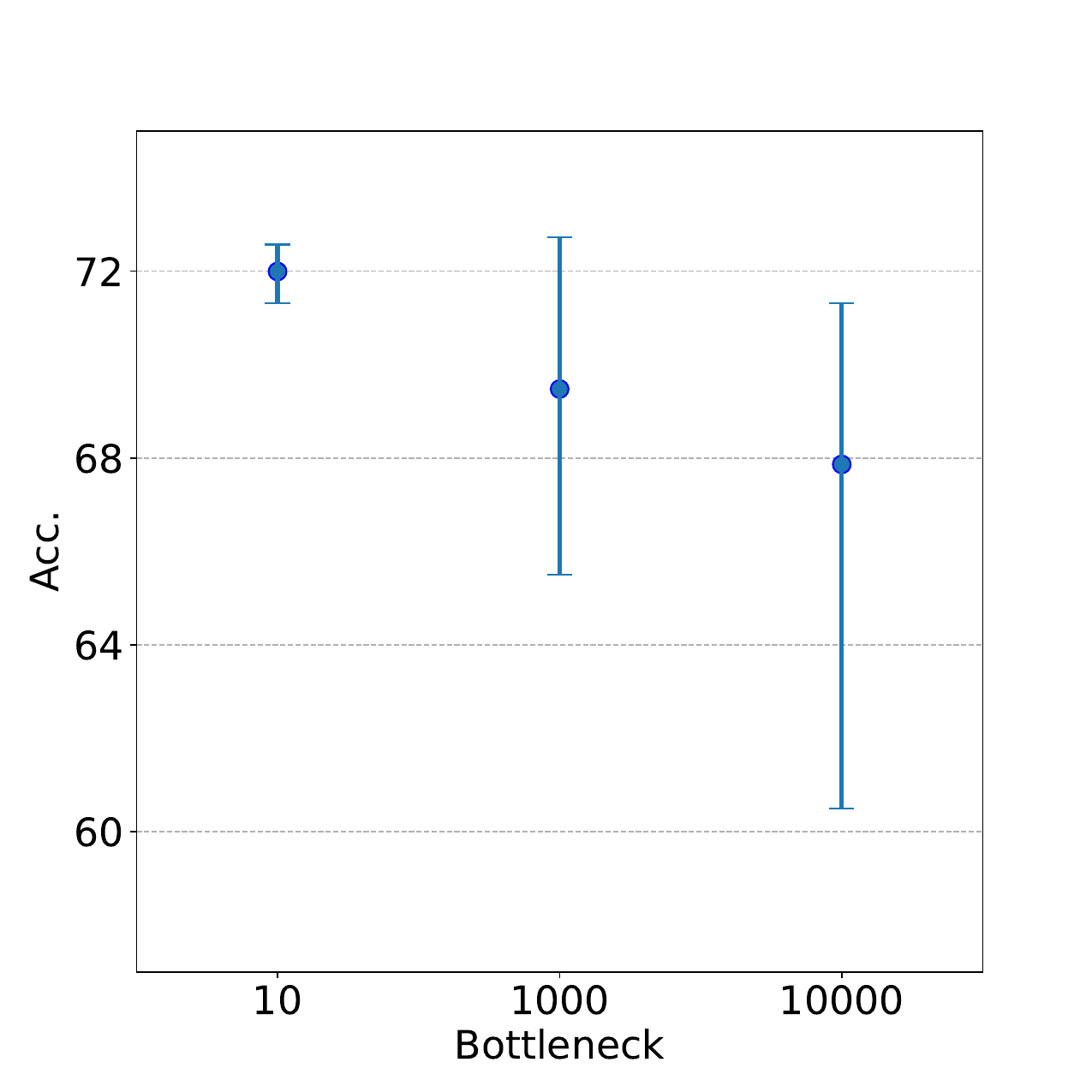}

\caption{We enlarge the bottleneck to even 10000 to study the performance of DPT. $x$-axis is the bottleneck size. $y$-axis is the performance.}

\label{largeb}
\end{figure}

\section{Related Work}
\label{sec:relatedwork}
Parameter-efficient fine-tuning (PEFT) methods~\cite{he2022towards, ben-zaken-etal-2022-bitfit, he-etal-2022-sparseadapter, mao-etal-2022-unipelt, pmlr-v162-he22f} have emerged as a popular approach to fine-tune language model which can largely reduce the number of trainable parameters while maintaining competitive performance. 
There are two primary paradigms for current PEFT methods: adapter-based and prompt-based approaches.

\paragraph{Adapter-based Method}
The concept of the adapter was originally proposed by \citet{pmlr-v97-houlsby19a}. They inserted a down projection layer followed by an up projection layer in each sub-module of the transformer sequentially. 
When adapting a model to downstream tasks, only the parameters of the adapter were updated \cite{pfeiffer-etal-2021-adapterfusion, sung2022vl, Chen_2023_ICCV, zeng-etal-2023-one}.  
To enhance this, \citet{karimi-mahabadi-etal-2021-parameter} incorporated hypernetworks, which generate weights for the main network, thus enabling shared information capture across tasks.
Following that, \citet{hu2022lora} proposed LoRA to approximate the update of the neural weight.
LoRA was based on the assumption that the change in weights during model adaptation has a low ``intrinsic rank''. 
Different from the adapter, LoRA did not introduce additional latency. 
Apart from being a PEFT method, the adapter has also been employed in broader applications in natural language processing \cite{pfeiffer-etal-2021-adapterfusion}.

Our work diverges from LoRA \cite{hu2022lora} and the broader adapter framework. Rather than learning updates to the parameters, our approach directly learns the parameters, under the hypothesis that the soft prompt inherently exhibits a low ``intrinsic rank''. Furthermore, LoRA focuses on Query-specific and Value-specific parameters within the transformer architecture, whereas our approach adheres to the prompt tuning paradigm, where trainable parameters are exclusively inserted in the embedding layer. Despite these fundamental differences, it is noteworthy that both our approach and LoRA share an underlying principle of leveraging low-rank structures.

\paragraph{Prompt-based Method}
Prompt-based methods can be categorized into hard prompt and soft prompt approaches.
Hard prompts involve adding a fixed sequence of discrete tokens for the model to condition on for generation \cite{10.1162/tacl_a_00324, shin-etal-2020-autoprompt, pmlr-v139-zhao21c, 10.1145/3560815}. 
However, they are sensitive to slight changes and require complex designs, like verbalizer selection.
To address this, trainable soft prompt methods were introduced.
\citet{li-liang-2021-prefix} introduced prefix tuning, which adds virtual tokens in each layer of the encoder stack. 
As a simplification of prefix tuning, Prompt Tuning \cite{lester-etal-2021-power} only adds a soft prompt to the embedding layer. 
Further advancements include incorporating soft prompts across all transformer layers to enhance performance \cite{liu-etal-2022-p}.
Additionally, transfer learning-based methods \cite{wang2023multitask, vu-etal-2022-spot} have been explored for better prompt initialization through pre-training. 
For instance, \citet{wang2023multitask} performed pre-training to learn a soft prompt on a collection of source tasks and subsequently employed the acquired prompt embeddings as initialization for target tasks. 
Transfer learning-based techniques offer the potential for better initialization and more effective prompt tuning.
In contrast to prior work, this paper focuses on developing a more efficient parameterization of the soft prompt.

\section{Conclusion}
In this work, we uncover the low ``intrinsic rank'' behavior inherent in the soft prompt through empirical examination. Motivated by these findings, we introduce {\em Decomposed Prompt Tuning} (DPT), a novel approach that reparameterizes the soft prompt using two compact matrices. By adjusting the bottleneck, DPT enables the effective manipulation of the soft prompt matrix, ensuring it maintains a low rank. Notably, DPT attains strong performance across the SuperGLUE benchmark in high-resource and low-resource scenarios while substantially reducing the number of trainable parameters.


\section*{Limitations}
Despite our proposed approach being simple but effective, 
our method still possesses a few limitations.
We mainly evaluate our model on the natural understanding tasks, i.e., SuperGLUE.
We do not evaluate our proposed method on the natural language generation tasks, and we leave it for future work.
Furthermore, our method also inherits a few drawbacks from the vanilla prompt tuning, such as slow convergence. 
These limitations serve as future directions for further improvement. 
Another limitation is that we only evaluate our method with the encoder-decoder backbone models. We leave the explorations with encoder-only models and other large-scale pre-trained models for future work. 

\section*{Acknowledgements}
We would like to thank the anonymous reviewers for their constructive comments.



\bibliography{emnlp2023}

\begin{thebibliography}{44}
\expandafter\ifx\csname natexlab\endcsname\relax\def\natexlab#1{#1}\fi

\bibitem[{Aghajanyan et~al.(2020)Aghajanyan, Zettlemoyer, and Gupta}]{Aghajanyan2020IntrinsicDE}
Armen Aghajanyan, Luke Zettlemoyer, and Sonal Gupta. 2020.
\newblock \href {https://aclanthology.org/2021.acl-long.568.pdf} {Intrinsic dimensionality explains the effectiveness of language model fine-tuning}.
\newblock In \emph{Proceedings of ACL}.

\bibitem[{Aribandi et~al.(2022)Aribandi, Tay, Schuster, Rao, Zheng, Mehta, Zhuang, Tran, Bahri, Ni, Gupta, Hui, Ruder, and Metzler}]{aribandi2022ext}
Vamsi Aribandi, Yi~Tay, Tal Schuster, Jinfeng Rao, Huaixiu~Steven Zheng, Sanket~Vaibhav Mehta, Honglei Zhuang, Vinh~Q. Tran, Dara Bahri, Jianmo Ni, Jai Gupta, Kai Hui, Sebastian Ruder, and Donald Metzler. 2022.
\newblock \href {https://openreview.net/forum?id=Vzh1BFUCiIX} {Ext5: Towards extreme multi-task scaling for transfer learning}.
\newblock In \emph{Proceedings of ICLR}.

\bibitem[{Asai et~al.(2022)Asai, Salehi, Peters, and Hajishirzi}]{asai-etal-2022-attempt}
Akari Asai, Mohammadreza Salehi, Matthew Peters, and Hannaneh Hajishirzi. 2022.
\newblock \href {https://aclanthology.org/2022.emnlp-main.446} {{ATTEMPT}: Parameter-efficient multi-task tuning via attentional mixtures of soft prompts}.
\newblock In \emph{Proceedings of EMNLP}.

\bibitem[{Ba et~al.(2016)Ba, Kiros, and Hinton}]{ba2016layer}
Jimmy~Lei Ba, Jamie~Ryan Kiros, and Geoffrey~E. Hinton. 2016.
\newblock \href {https://arxiv.org/pdf/1607.06450v1.pdf} {Layer normalization}.
\newblock In \emph{Proceedings of NeurIPS}.

\bibitem[{Ben~Zaken et~al.(2022)Ben~Zaken, Goldberg, and Ravfogel}]{ben-zaken-etal-2022-bitfit}
Elad Ben~Zaken, Yoav Goldberg, and Shauli Ravfogel. 2022.
\newblock \href {https://aclanthology.org/2022.acl-short.1} {{B}it{F}it: Simple parameter-efficient fine-tuning for transformer-based masked language-models}.
\newblock In \emph{Proceedings of ACL}.

\bibitem[{Chen et~al.(2023)Chen, Zhu, Deng, Cao, Wang, Zhang, Li, Sun, Zang, and Mao}]{Chen_2023_ICCV}
Tianrun Chen, Lanyun Zhu, Chaotao Deng, Runlong Cao, Yan Wang, Shangzhan Zhang, Zejian Li, Lingyun Sun, Ying Zang, and Papa Mao. 2023.
\newblock \href {arXiv preprint arXiv:2304.09148} {Sam-adapter: Adapting segment anything in underperformed scenes}.
\newblock In \emph{Proceedings of ICCV}.

\bibitem[{De~Marneffe et~al.(2019)De~Marneffe, Simons, and Tonhauser}]{de2019commitmentbank}
Marie-Catherine De~Marneffe, Mandy Simons, and Judith Tonhauser. 2019.
\newblock \href {https://ojs.ub.uni-konstanz.de/sub/index.php/sub/article/view/601} {The commitmentbank: Investigating projection in naturally occurring discourse}.
\newblock In \emph{proceedings of Sinn und Bedeutung}.

\bibitem[{Devlin et~al.(2019)Devlin, Chang, Lee, and Toutanova}]{devlin-etal-2019-bert}
Jacob Devlin, Ming-Wei Chang, Kenton Lee, and Kristina Toutanova. 2019.
\newblock \href {https://aclanthology.org/N19-1423} {{BERT}: Pre-training of deep bidirectional transformers for language understanding}.
\newblock In \emph{Proceedings of NAACL}.

\bibitem[{Guo et~al.(2021)Guo, Rush, and Kim}]{guo-etal-2021-parameter}
Demi Guo, Alexander Rush, and Yoon Kim. 2021.
\newblock \href {https://aclanthology.org/2021.acl-long.378} {Parameter-efficient transfer learning with diff pruning}.
\newblock In \emph{Proceedings of ACL}.

\bibitem[{He et~al.(2022{\natexlab{a}})He, Zhou, Ma, Berg-Kirkpatrick, and Neubig}]{he2022towards}
Junxian He, Chunting Zhou, Xuezhe Ma, Taylor Berg-Kirkpatrick, and Graham Neubig. 2022{\natexlab{a}}.
\newblock \href {https://openreview.net/forum?id=0RDcd5Axok} {Towards a unified view of parameter-efficient transfer learning}.
\newblock In \emph{Proceedings of ICLR}.

\bibitem[{He et~al.(2022{\natexlab{b}})He, Ding, Dong, Zhang, and Tao}]{he-etal-2022-sparseadapter}
Shwai He, Liang Ding, Daize Dong, Jeremy Zhang, and Dacheng Tao. 2022{\natexlab{b}}.
\newblock \href {https://aclanthology.org/2022.findings-emnlp.160} {{S}parse{A}dapter: An easy approach for improving the parameter-efficiency of adapters}.
\newblock In \emph{Proceedings of EMNLP}.

\bibitem[{He et~al.(2022{\natexlab{c}})He, Zheng, Tay, Gupta, Du, Aribandi, Zhao, Li, Chen, Metzler, Cheng, and Chi}]{pmlr-v162-he22f}
Yun He, Steven Zheng, Yi~Tay, Jai Gupta, Yu~Du, Vamsi Aribandi, Zhe Zhao, Yaguang Li, Zhao Chen, Donald Metzler, Heng-Tze Cheng, and Ed~H. Chi. 2022{\natexlab{c}}.
\newblock \href {https://proceedings.mlr.press/v162/he22f.html} {{H}yper{P}rompt: Prompt-based task-conditioning of transformers}.
\newblock In \emph{Proceedings of ICLR}.

\bibitem[{Houlsby et~al.(2019)Houlsby, Giurgiu, Jastrzebski, Morrone, De~Laroussilhe, Gesmundo, Attariyan, and Gelly}]{pmlr-v97-houlsby19a}
Neil Houlsby, Andrei Giurgiu, Stanislaw Jastrzebski, Bruna Morrone, Quentin De~Laroussilhe, Andrea Gesmundo, Mona Attariyan, and Sylvain Gelly. 2019.
\newblock \href {http://proceedings.mlr.press/v97/houlsby19a/houlsby19a.pdf} {Parameter-efficient transfer learning for {NLP}}.
\newblock In \emph{Proceedings of ICML}.

\bibitem[{Hu et~al.(2022)Hu, yelong shen, Wallis, Allen-Zhu, Li, Wang, Wang, and Chen}]{hu2022lora}
Edward~J Hu, yelong shen, Phillip Wallis, Zeyuan Allen-Zhu, Yuanzhi Li, Shean Wang, Lu~Wang, and Weizhu Chen. 2022.
\newblock \href {https://openreview.net/forum?id=nZeVKeeFYf9} {Lo{RA}: Low-rank adaptation of large language models}.
\newblock In \emph{Proceedings of ICLR}.

\bibitem[{Jiang et~al.(2020)Jiang, Xu, Araki, and Neubig}]{10.1162/tacl_a_00324}
Zhengbao Jiang, Frank~F. Xu, Jun Araki, and Graham Neubig. 2020.
\newblock \href {https://doi.org/10.1162/tacl\_a\_00324} {{How Can We Know What Language Models Know?}}
\newblock \emph{Transactions of the Association for Computational Linguistics}.

\bibitem[{Karimi~Mahabadi et~al.(2021)Karimi~Mahabadi, Ruder, Dehghani, and Henderson}]{karimi-mahabadi-etal-2021-parameter}
Rabeeh Karimi~Mahabadi, Sebastian Ruder, Mostafa Dehghani, and James Henderson. 2021.
\newblock \href {https://aclanthology.org/2021.acl-long.47} {Parameter-efficient multi-task fine-tuning for transformers via shared hypernetworks}.
\newblock In \emph{Proceedings of ACL}.

\bibitem[{Lester et~al.(2021)Lester, Al-Rfou, and Constant}]{lester-etal-2021-power}
Brian Lester, Rami Al-Rfou, and Noah Constant. 2021.
\newblock \href {https://aclanthology.org/2021.emnlp-main.243} {The power of scale for parameter-efficient prompt tuning}.
\newblock In \emph{Proceedings of EMNLP}.

\bibitem[{Li et~al.(2018)Li, Farkhoor, Liu, and Yosinski}]{li-id-2018-ICLR}
Chunyuan Li, Heerad Farkhoor, Rosanne Liu, and Jason Yosinski. 2018.
\newblock \href {https://arxiv.org/pdf/1804.08838.pdf} {Measuring the intrinsic dimension of objective landscapes}.
\newblock In \emph{Proceedings of ICLR}.

\bibitem[{Li and Liang(2021)}]{li-liang-2021-prefix}
Xiang~Lisa Li and Percy Liang. 2021.
\newblock \href {https://aclanthology.org/2021.acl-long.353} {Prefix-tuning: Optimizing continuous prompts for generation}.
\newblock In \emph{Proceedings of ACL}.

\bibitem[{Liu et~al.(2022{\natexlab{a}})Liu, Tam, Muqeeth, Mohta, Huang, Bansal, and Raffel}]{NEURIPS2022_0cde695b}
Haokun Liu, Derek Tam, Mohammed Muqeeth, Jay Mohta, Tenghao Huang, Mohit Bansal, and Colin~A Raffel. 2022{\natexlab{a}}.
\newblock \href {https://proceedings.neurips.cc/paper_files/paper/2022/file/0cde695b83bd186c1fd456302888454c-Paper-Conference.pdf} {Few-shot parameter-efficient fine-tuning is better and cheaper than in-context learning}.
\newblock In \emph{Proceedings of NeurIPS}.

\bibitem[{Liu et~al.(2023)Liu, Yuan, Fu, Jiang, Hayashi, and Neubig}]{10.1145/3560815}
Pengfei Liu, Weizhe Yuan, Jinlan Fu, Zhengbao Jiang, Hiroaki Hayashi, and Graham Neubig. 2023.
\newblock \href {https://doi.org/10.1145/3560815} {Pre-train, prompt, and predict: A systematic survey of prompting methods in natural language processing}.
\newblock \emph{ACM Comput. Surv.}

\bibitem[{Liu et~al.(2022{\natexlab{b}})Liu, Ji, Fu, Tam, Du, Yang, and Tang}]{liu-etal-2022-p}
Xiao Liu, Kaixuan Ji, Yicheng Fu, Weng Tam, Zhengxiao Du, Zhilin Yang, and Jie Tang. 2022{\natexlab{b}}.
\newblock \href {https://aclanthology.org/2022.acl-short.8} {{P}-tuning: Prompt tuning can be comparable to fine-tuning across scales and tasks}.
\newblock In \emph{Proceedings of ACL}.

\bibitem[{Liu et~al.(2021)Liu, Zheng, Du, Ding, Qian, Yang, and Tang}]{liu2021gpt}
Xiao Liu, Yanan Zheng, Zhengxiao Du, Ming Ding, Yujie Qian, Zhilin Yang, and Jie Tang. 2021.
\newblock \href {https://arxiv.org/pdf/2103.10385.pdf} {Gpt understands, too}.
\newblock \emph{arXiv:2103.10385}.

\bibitem[{Liu et~al.(2020)Liu, Ott, Goyal, Du, Joshi, Chen, Levy, Lewis, Zettlemoyer, and Stoyanov}]{liu2020roberta}
Yinhan Liu, Myle Ott, Naman Goyal, Jingfei Du, Mandar Joshi, Danqi Chen, Omer Levy, Mike Lewis, Luke Zettlemoyer, and Veselin Stoyanov. 2020.
\newblock \href {https://openreview.net/forum?id=SyxS0T4tvS} {Ro{\{}bert{\}}a: A robustly optimized {\{}bert{\}} pretraining approach}.

\bibitem[{Loshchilov and Hutter(2019)}]{loshchilov2018decoupled}
Ilya Loshchilov and Frank Hutter. 2019.
\newblock \href {https://openreview.net/forum?id=Bkg6RiCqY7} {Decoupled weight decay regularization}.
\newblock In \emph{Proceedings of ICLR}.

\bibitem[{Ma et~al.(2022)Ma, Zhang, Ren, Wang, Wang, Wu, Quan, and Song}]{ma-etal-2022-xprompt}
Fang Ma, Chen Zhang, Lei Ren, Jingang Wang, Qifan Wang, Wei Wu, Xiaojun Quan, and Dawei Song. 2022.
\newblock \href {https://aclanthology.org/2022.emnlp-main.758} {{XP}rompt: Exploring the extreme of prompt tuning}.
\newblock In \emph{Proceedings of EMNLP}.

\bibitem[{mahabadi et~al.(2021)mahabadi, Henderson, and Ruder}]{mahabadi2021compacter}
Rabeeh~Karimi mahabadi, James Henderson, and Sebastian Ruder. 2021.
\newblock \href {https://openreview.net/forum?id=bqGK5PyI6-N} {Compacter: Efficient low-rank hypercomplex adapter layers}.
\newblock In \emph{Proceedings of ICLR}.

\bibitem[{Mao et~al.(2022)Mao, Mathias, Hou, Almahairi, Ma, Han, Yih, and Khabsa}]{mao-etal-2022-unipelt}
Yuning Mao, Lambert Mathias, Rui Hou, Amjad Almahairi, Hao Ma, Jiawei Han, Scott Yih, and Madian Khabsa. 2022.
\newblock \href {https://aclanthology.org/2022.acl-long.433} {{U}ni{PELT}: A unified framework for parameter-efficient language model tuning}.
\newblock In \emph{Proceedings of ACL}.

\bibitem[{Peters et~al.(2018)Peters, Neumann, Iyyer, Gardner, Clark, Lee, and Zettlemoyer}]{peters-etal-2018-deep}
Matthew~E. Peters, Mark Neumann, Mohit Iyyer, Matt Gardner, Christopher Clark, Kenton Lee, and Luke Zettlemoyer. 2018.
\newblock \href {https://aclanthology.org/N18-1202} {Deep contextualized word representations}.
\newblock In \emph{Proceedings of ACL}.

\bibitem[{Pfeiffer et~al.(2021)Pfeiffer, Kamath, R{\"u}ckl{\'e}, Cho, and Gurevych}]{pfeiffer-etal-2021-adapterfusion}
Jonas Pfeiffer, Aishwarya Kamath, Andreas R{\"u}ckl{\'e}, Kyunghyun Cho, and Iryna Gurevych. 2021.
\newblock \href {https://aclanthology.org/2021.eacl-main.39} {{A}dapter{F}usion: Non-destructive task composition for transfer learning}.
\newblock In \emph{Proceedings of EACL}.

\bibitem[{Qin and Eisner(2021)}]{qin-eisner-2021}
Guanghui Qin and Jason Eisner. 2021.
\newblock \href {http://cs.jhu.edu/~jason/papers/#qin-eisner-2021} {Learning how to ask: Querying {LM}s with mixtures of soft prompts}.
\newblock In \emph{Proceedings of NAACL}.

\bibitem[{Radford et~al.(2019)Radford, Wu, Child, Luan, Amodei, Sutskever et~al.}]{radford2019language}
Alec Radford, Jeffrey Wu, Rewon Child, David Luan, Dario Amodei, Ilya Sutskever, et~al. 2019.
\newblock \href {https://d4mucfpksywv.cloudfront.net/better-language-models/language-models.pdf} {Language models are unsupervised multitask learners}.
\newblock \emph{OpenAI blog}.

\bibitem[{Raffel et~al.(2020)Raffel, Shazeer, Roberts, Lee, Narang, Matena, Zhou, Li, and Liu}]{JMLR:v21:20-074}
Colin Raffel, Noam Shazeer, Adam Roberts, Katherine Lee, Sharan Narang, Michael Matena, Yanqi Zhou, Wei Li, and Peter~J. Liu. 2020.
\newblock \href {http://jmlr.org/papers/v21/20-074.html} {Exploring the limits of transfer learning with a unified text-to-text transformer}.
\newblock \emph{Journal of Machine Learning Research}.

\bibitem[{Razdaibiedina et~al.(2023)Razdaibiedina, Mao, Hou, Khabsa, Lewis, Ba, and Almahairi}]{razdaibiedina2023residual}
Anastasia Razdaibiedina, Yuning Mao, Rui Hou, Madian Khabsa, Mike Lewis, Jimmy Ba, and Amjad Almahairi. 2023.
\newblock \href {https://arxiv.org/abs/2305.03937} {Residual prompt tuning: Improving prompt tuning with residual reparameterization}.
\newblock In \emph{Proceedings of ACL}.

\bibitem[{R{\"u}ckl{\'e} et~al.(2021)R{\"u}ckl{\'e}, Geigle, Glockner, Beck, Pfeiffer, Reimers, and Gurevych}]{ruckle-etal-2021-adapterdrop}
Andreas R{\"u}ckl{\'e}, Gregor Geigle, Max Glockner, Tilman Beck, Jonas Pfeiffer, Nils Reimers, and Iryna Gurevych. 2021.
\newblock \href {https://aclanthology.org/2021.emnlp-main.626} {{AdapterDrop}: {O}n the efficiency of adapters in transformers}.
\newblock In \emph{Proceedings of EMNLP}.

\bibitem[{Shin et~al.(2020)Shin, Razeghi, Logan~IV, Wallace, and Singh}]{shin-etal-2020-autoprompt}
Taylor Shin, Yasaman Razeghi, Robert~L. Logan~IV, Eric Wallace, and Sameer Singh. 2020.
\newblock \href {https://aclanthology.org/2020.emnlp-main.346} {{A}uto{P}rompt: {E}liciting {K}nowledge from {L}anguage {M}odels with {A}utomatically {G}enerated {P}rompts}.
\newblock In \emph{Proceedings of EMNLP}.

\bibitem[{Sung et~al.(2022)Sung, Cho, and Bansal}]{sung2022vl}
Yi-Lin Sung, Jaemin Cho, and Mohit Bansal. 2022.
\newblock \href {https://openaccess.thecvf.com/content/CVPR2022/papers/Sung_VL-Adapter_Parameter-Efficient_Transfer_Learning_for_Vision-and-Language_Tasks_CVPR_2022_paper.pdf} {Vl-adapter: Parameter-efficient transfer learning for vision-and-language tasks}.
\newblock In \emph{Proceedings of CVPR}.

\bibitem[{Vu et~al.(2022)Vu, Lester, Constant, Al-Rfou{'}, and Cer}]{vu-etal-2022-spot}
Tu~Vu, Brian Lester, Noah Constant, Rami Al-Rfou{'}, and Daniel Cer. 2022.
\newblock \href {https://aclanthology.org/2022.acl-long.346} {{SP}o{T}: Better frozen model adaptation through soft prompt transfer}.
\newblock In \emph{Proceedings of ACL}.

\bibitem[{Wang et~al.(2019)Wang, Pruksachatkun, Nangia, Singh, Michael, Hill, Levy, and Bowman}]{NEURIPS2019_4496bf24}
Alex Wang, Yada Pruksachatkun, Nikita Nangia, Amanpreet Singh, Julian Michael, Felix Hill, Omer Levy, and Samuel Bowman. 2019.
\newblock \href {https://proceedings.neurips.cc/paper_files/paper/2019/file/4496bf24afe7fab6f046bf4923da8de6-Paper.pdf} {Superglue: A stickier benchmark for general-purpose language understanding systems}.
\newblock In \emph{Proceedings of NeurIPS}.

\bibitem[{Wang et~al.(2022)Wang, Agarwal, Mukherjee, Liu, Gao, Awadallah, and Gao}]{wang-etal-2022-adamix}
Yaqing Wang, Sahaj Agarwal, Subhabrata Mukherjee, Xiaodong Liu, Jing Gao, Ahmed~Hassan Awadallah, and Jianfeng Gao. 2022.
\newblock \href {https://aclanthology.org/2022.emnlp-main.388} {{A}da{M}ix: Mixture-of-adaptations for parameter-efficient model tuning}.
\newblock In \emph{Proceedings of EMNLP}.

\bibitem[{Wang et~al.(2023)Wang, Panda, Karlinsky, Feris, Sun, and Kim}]{wang2023multitask}
Zhen Wang, Rameswar Panda, Leonid Karlinsky, Rogerio Feris, Huan Sun, and Yoon Kim. 2023.
\newblock \href {https://openreview.net/forum?id=Nk2pDtuhTq} {Multitask prompt tuning enables parameter-efficient transfer learning}.
\newblock In \emph{Proceedings of ICLR}.

\bibitem[{Wolf et~al.(2020)Wolf, Debut, Sanh, Chaumond, Delangue, Moi, Cistac, Rault, Louf, Funtowicz, Davison, Shleifer, von Platen, Ma, Jernite, Plu, Xu, Le~Scao, Gugger, Drame, Lhoest, and Rush}]{wolf-etal-2020-transformers}
Thomas Wolf, Lysandre Debut, Victor Sanh, Julien Chaumond, Clement Delangue, Anthony Moi, Pierric Cistac, Tim Rault, Remi Louf, Morgan Funtowicz, Joe Davison, Sam Shleifer, Patrick von Platen, Clara Ma, Yacine Jernite, Julien Plu, Canwen Xu, Teven Le~Scao, Sylvain Gugger, Mariama Drame, Quentin Lhoest, and Alexander Rush. 2020.
\newblock \href {https://aclanthology.org/2020.emnlp-demos.6} {Transformers: State-of-the-art natural language processing}.
\newblock In \emph{Proceedings of EMNLP}.

\bibitem[{Zeng et~al.(2023)Zeng, Zhang, and Lu}]{zeng-etal-2023-one}
Guangtao Zeng, Peiyuan Zhang, and Wei Lu. 2023.
\newblock \href {https://aclanthology.org/2023.acl-long.418} {One network, many masks: Towards more parameter-efficient transfer learning}.
\newblock In \emph{Proceedings of ACL}.

\bibitem[{Zhao et~al.(2021)Zhao, Wallace, Feng, Klein, and Singh}]{pmlr-v139-zhao21c}
Zihao Zhao, Eric Wallace, Shi Feng, Dan Klein, and Sameer Singh. 2021.
\newblock \href {https://proceedings.mlr.press/v139/zhao21c.html} {Calibrate before use: Improving few-shot performance of language models}.
\newblock In \emph{Proceedings of ICML}.

\end{thebibliography}
\bibliographystyle{acl_natbib}

\appendix

\section{Appendix}
\label{sec:appendix}

\subsection{Rank Explanation}

\begin{theorem}
\label{app:rank_theo}
Let A and B be matrices such that the product AB is well defined. Then
\[rank(AB)\leqslant \min(rank(A), rank(B))\]
\end{theorem}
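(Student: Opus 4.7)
The plan is to establish the two bounds $\mathrm{rank}(AB) \leq \mathrm{rank}(A)$ and $\mathrm{rank}(AB) \leq \mathrm{rank}(B)$ separately, and then combine them into the single $\min$ statement. The main tool will be the characterization of rank as the dimension of the column space (equivalently, the row space), together with the observation that matrix multiplication realizes columns of a product as linear combinations of columns of the left factor.

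For the first bound, I would write the $j$th column of $AB$ as $A b_j$, where $b_j$ is the $j$th column of $B$. Expanding this product exhibits $A b_j$ as a linear combination of the columns of $A$ with coefficients given by the entries of $b_j$. Hence the column space of $AB$ is contained in the column space of $A$, and taking dimensions yields $\mathrm{rank}(AB) \leq \mathrm{rank}(A)$.

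For the second bound, the cleanest route is via transposition: apply the already-proven inequality to $(AB)^{T} = B^{T} A^{T}$ to obtain $\mathrm{rank}(B^{T} A^{T}) \leq \mathrm{rank}(B^{T})$, and then use the standard fact that rank is invariant under transposition (row rank equals column rank) to conclude $\mathrm{rank}(AB) \leq \mathrm{rank}(B)$. Alternatively, one can argue directly that each row of $AB$ is a linear combination of the rows of $B$, so the row space of $AB$ sits inside the row space of $B$.

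Combining the two inequalities immediately gives the stated bound. Honestly, there is no serious obstacle here: the result is a standard linear-algebra fact, and the proof amounts to unwinding the definition of matrix multiplication in the right basis. The only step that strictly requires a citation or a one-line justification is the invariance of rank under transposition, which in turn rests on the equality of row rank and column rank.
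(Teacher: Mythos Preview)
Your proposal is correct and matches the paper's proof almost exactly: the paper obtains $\mathrm{rank}(AB)\le\mathrm{rank}(A)$ from the column-space inclusion $\mathcal{R}(AB)\subseteq\mathcal{R}(A)$, and for the second inequality it uses your stated alternative---arguing directly that each row of $AB$ lies in the row space of $B$ and invoking row rank $=$ column rank---rather than the transposition route. Either variant is fine, and you already note both.
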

\begin{proof}
    Each column of $A B$ is a combination of the columns of $A$, which implies that $\mathcal{R}(A B) \subseteq \mathcal{R}(A)$. Hence, $\operatorname{dim}(\mathcal{R}(A B)) \leq \operatorname{dim}(\mathcal{R}(A))$, or equivalently, $\operatorname{rank}(A B) \leq \operatorname{rank}(A)$
Each row of $A B$ is a combination of the rows of $B \rightarrow$ rowspace $(A B) \subseteq$ rowspace $(B)$, but the dimension of rowspace $=$ dimension of column space $=\operatorname{rank}$, so that $\operatorname{rank}(A B) \leq \operatorname{rank}(B)$.
\end{proof}

According to theorem \ref{app:rank_theo}, if $P_{emb} = U \Sigma V$, it can be easily obtained that:
\[rank(P_{emb})\leqslant \min(rank(U), rank(\Sigma), rank(V))\]

 $rank(\Sigma)$ is an upper bound of $rank(P_{emb})$, because $U$ and $V$ are always full rank during training process. Thus, we can use $rank(\Sigma)$ as an approximation for $rank(P_{emb})$.

\begin{figure*}[t]
\centering
    \begin{subfigure}{0.3\textwidth}
        \centering
        \includegraphics[scale=0.35]{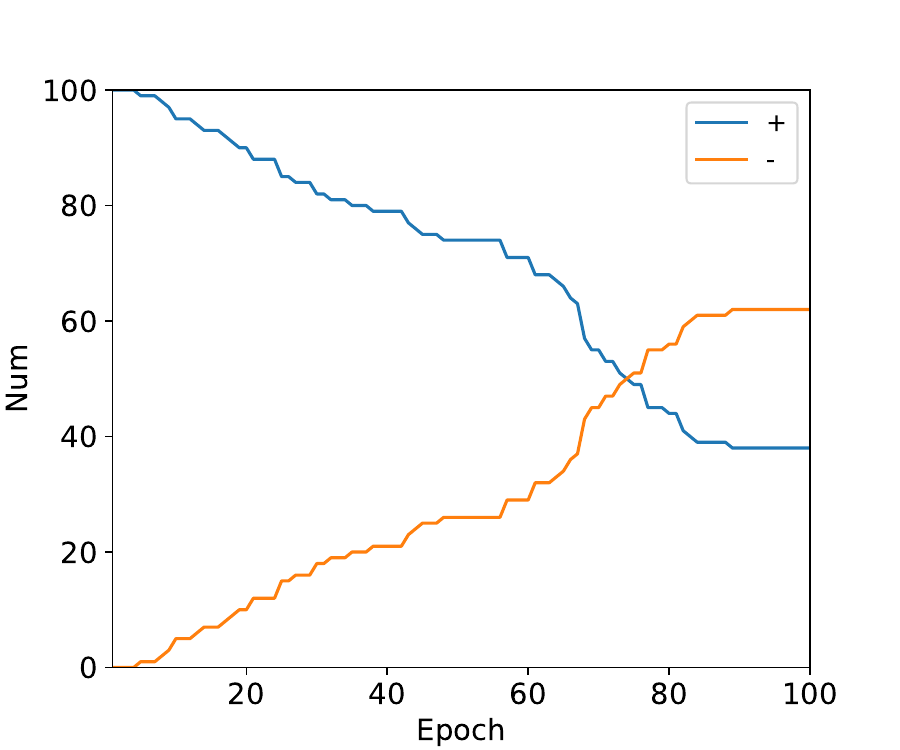}
        \caption{T5-small, RTE}
        
    \end{subfigure}
    \begin{subfigure}{0.3\textwidth}
        \centering
        \includegraphics[scale=0.35]{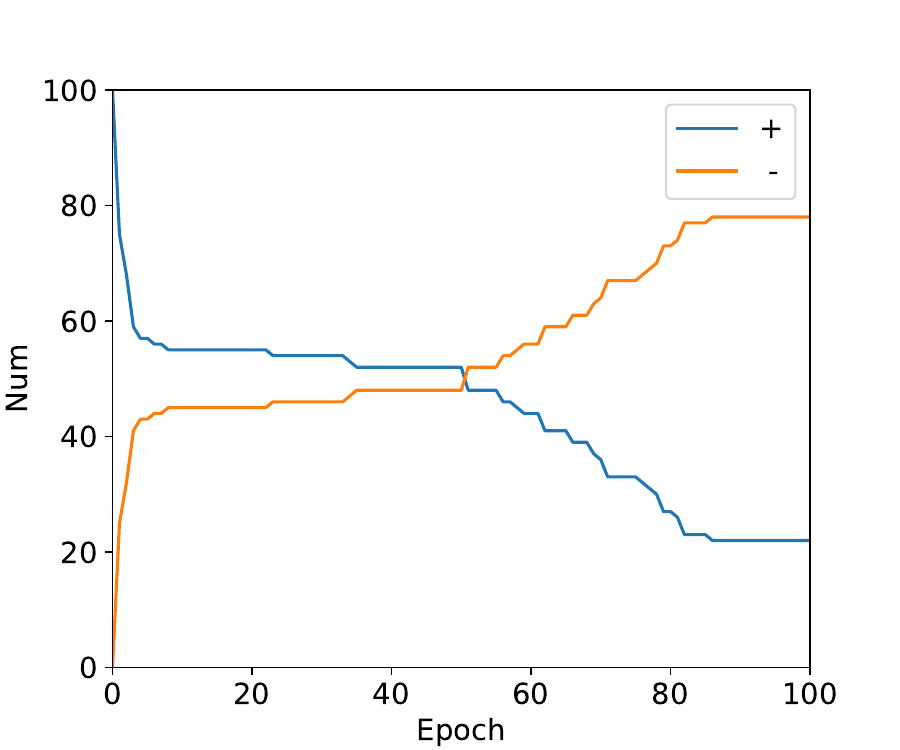}
        \caption{T5-base, RTE}
        
    \end{subfigure}
    \begin{subfigure}{0.3\textwidth}
        \centering
        \includegraphics[scale=0.35]{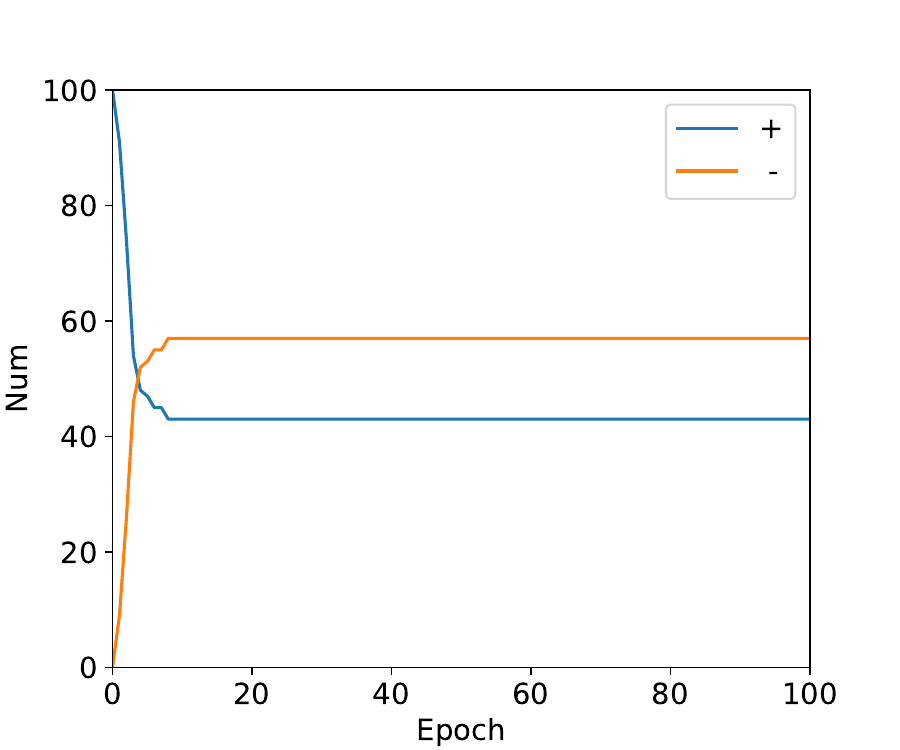}
        \caption{T5-small, CB}
        
    \end{subfigure}
    \begin{subfigure}{0.3\textwidth}
        \centering
        \includegraphics[scale=0.35]{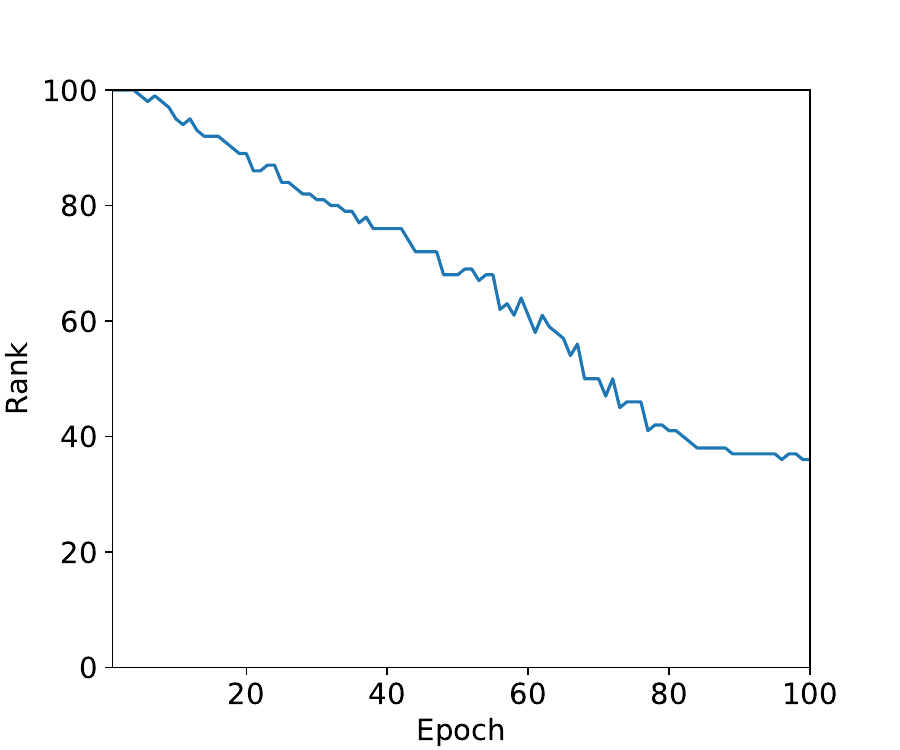}
        \caption{T5-small, RTE}
        
    \end{subfigure}
    \begin{subfigure}{0.3\textwidth}
        \centering
        \includegraphics[scale=0.35]{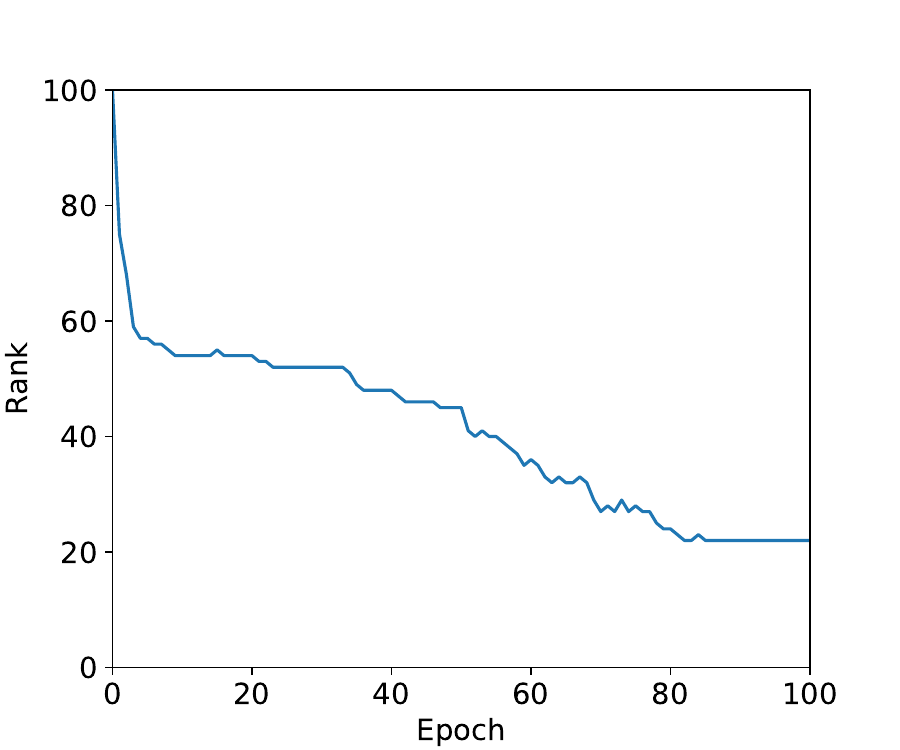}
        \caption{T5-base, RTE}
       
    \end{subfigure}
    \begin{subfigure}{0.3\textwidth}
        \centering
        \includegraphics[scale=0.35]{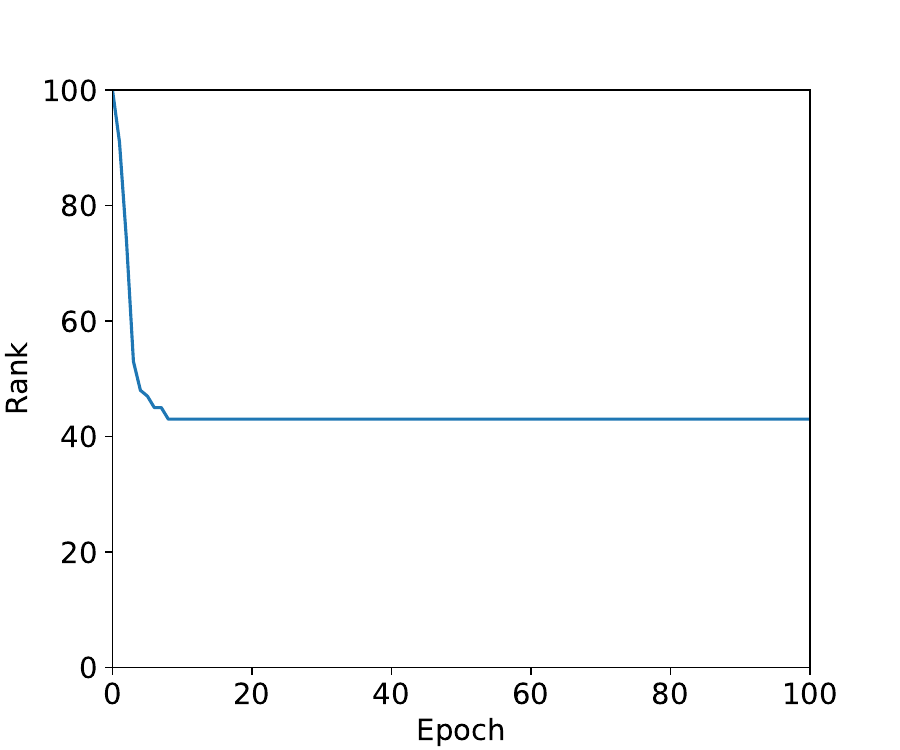}
        \caption{T5-small, CB}
        
    \end{subfigure}
    \caption{The first row is record of positive and negative diagonal entries. The second row is the rank record.}
    \label{var_rank}
\end{figure*}

\subsection{Record of Rank Change}
\label{app:rank_change}

We conduct additional experiments with different models and datasets to probe the rank change of soft prompt. 
We find consistent observations that the soft prompt tends to exhibit a low ``intrinsic rank'' behavior. 
As we can see from Figure \ref{var_rank}, the number of positive entries is decreasing while the number of negative entries is increasing. 
In the second row, we also record the rank change of the soft prompt.
Specifically, we calculate the rank of the matrix obtained by multiplying $U$, $\Sigma$, and $V$.

\subsection{Dataset Details}\label{stat}
We record the statistical details of the SuperGLUE benchmark and the metrics we use, and Table \ref{aptb:stats} shows the statistics.
Note that the original CB dataset has 554 training samples, we follow \cite{JMLR:v21:20-074} to keep only 259 samples with a "True" label. We transform the task to a text-to-text format. Specifically, we highlight the ambiguous pronoun in the input text and ask the model to predict the noun that it refers to \cite{JMLR:v21:20-074}.

\begin{table}[h] 
 \begin{center}
 \begin{tabular}{lcccc}  
\toprule   
Dataset & Train & Dev & Task & Metric\\  
\midrule   
BoolQ & 9,427 & 3,270 & QA & Acc.  \\ 
CB & 250 & 56 & NLI & Acc.  \\  
COPA & 400 & 100 & QA & Acc.  \\   
MultiRC & 27,243 & 4,848 & QA & F1  \\   
ReCoRD & 100,730 & 10,000 & QA & F1  \\   
WiC & 5,428 &  638 & WSD & Acc.  \\
WSC & 259 & 104 &  Coref. & Acc. \\
RTE & 2,490 & 277 & NLI & Acc. \\
\bottomrule  
\end{tabular}
\end{center}
\caption{The details of 8 SuperGLUE tasks used in our
experiments.}
\label{aptb:stats}
\end{table}

\subsection{Baseline Implementation Details} \label{detail}
The length of the soft prompt is set as 100 for vanilla prompt tuning, which is an optimal setting according to the paper \cite{lester-etal-2021-power}. The soft prompt can be initialized by Gaussian distribution or copying from the vocabulary embedding. Here, we adopt the second initialization strategy because of its superiority. 

There are 10-token and 100-token residual prompt tuning variants \cite{razdaibiedina2023residual}. We implement the 100-token one because it has better performance. We also follow the paper to set the bottleneck as 400, above which leads to no improvement according to the paper. We follow them to adopt ReLU as activation function and perform layer normalization \cite{ba2016layer}.

\begin{table*}[t] 
 \begin{center}
 \resizebox{.7\textwidth}{!}{
 \begin{tabular}{lcccccccc}  
\toprule   
\textbf{Model} & \textbf{WSC} & \textbf{WiC} & \textbf{BoolQ} & \textbf{CB} & \textbf{COPA} & \textbf{MultiRC} & \textbf{ReCoRD} & \textbf{RTE}\\  
\midrule 
\multicolumn{9}{c}{\texttt{T5-Small}}\\
PT     & 1.10  & 2.82 & 1.28 & 3.57 & 1.15 & 0.90 & 0.30 &  3.12\\ 
Res PT & 1.10  & 5.7  & 1.64 & 4.12 & 0.57 & 0.51 & 0.22 &  3.68  \\  
Ours   &  0.55 & 0.65 & 0.85 & 4.12 & 4.35 & 0.90 & 0.15 &  1.98  \\   
\midrule 
\multicolumn{9}{c}{\texttt{T5-Base}}\\
PT     & 1.10 & 6.30 & 0.01 & 1.02 & 1.15 & 0.24 & 0.09 &  0.20 \\ 
Res PT & 0.55 & 3.68 & 10.11 & 10.46 & 2.08 & 0.20 & 0.13 &  1.50 \\  
Ours   & 0.96 & 1.33 & 0.79 & 6.43 & 2.08 & 0.46 & 0.10 &  1.30 \\   
\midrule 
\multicolumn{9}{c}{\texttt{T5-Large}}\\
PT     &  1.92 & 1.04 & 0.82 & 1.03 & 0.00 & 0.18 & 0.09 & 0.20 \\ 
Res PT & 3.08  & 0.56 & 0.13 & 1.79 & 1.52 & 0.04 & 0.17 & 0.75 \\  
Ours   & 1.46  & 0.63 & 0.27 & 2.06 & 2.51 & 0.39 & 0.15 & 0.36 \\ 
\bottomrule  
\end{tabular}}
\end{center}
\caption{We report standard deviation (\%) of three runs for PT, Res PT and our method.}
\label{std_table}
\end{table*}

\begin{table*}[t]
		\centering
		\resizebox{0.94\textwidth}{!}{
			\begin{tabular}{lccccccccc>{\columncolor[gray]{0.8}}c}
				\toprule
                \multirow{2}{*}{\textbf{Model}} & \textbf{\# Trainable} & \textbf{WSC} & \textbf{WiC} & \textbf{BoolQ} & \textbf{CB}& \textbf{COPA} & \textbf{MultiRC} & \textbf{ReCoRD} & \textbf{RTE} & \\ 
                &  \textbf{Params.}  & Acc. & Acc. & Acc. & Acc. & Acc. & F1 & F1 & Acc. &\multirow{-2}{*}{\textbf{Avg.}}
                \\
                \midrule
                \multicolumn{11}{c}{\texttt{T5-Small} }\\
               
                Residual PT  & 416K & 64.73 & 58.09 & 69.70 &  73.21 & 58.66 &  63.21  & 52.82 & 62.21 & 62.83 \\ 
               
				\midrule
                    \multicolumn{11}{c}{\texttt{T5-Base}}\\
    
                Residual PT & 624K & 68.26 & 66.87 & 79.11 & 74.99 & 57.66 &  71.20  & 72.41 & 68.58 & 69.88\\ 
                
				\midrule
                    \multicolumn{11}{c}{\texttt{T5-Large}}\\
                    
                    Residual PT  & 832K & 69.86 & 69.74 & 84.18 &  70.82 & 59.33 &  74.77  & 84.31 & 88.20 & 75.15\\ 
                    
				\bottomrule
			\end{tabular}
		}
		\caption{Results on the SuperGLUE validation set. All scores are the average over 3 runs with different random seeds. The last column (Avg.) indicates the average score across all datasets in the SuperGLUE benchmark.}
		\label{main-result-sup}
\end{table*}

\subsection{Standard Deviation of Scores}\label{std}
We report the the standard deviation of three runs for our method, prompt tuning and residual prompt tuning. The results are shown in Table \ref{std_table}.

\subsection{Residual PT of 10 tokens} \label{10token}
In most parts of this paper, we focus on soft prompt of 100 tokens. \cite{razdaibiedina2023residual} present 10-token and 100-token variant of residual prompt tuning in their main results. We reproduce the 100-token residual prompt tuning in the main result section. Here, we also reproduce the 10-token one in Table \ref{main-result-sup} .
The results are consistent with the original paper that the performance of 10-token version is inferior to that of 100-token version.

\subsection{Various Bottleneck Size} 
\label{sensitivity}
We try different bottleneck $b\in \{4, 6, 8, 10, 12, 14\}$ and fix prompt length as 100. The result of T5-Base is similar to that of T5-Large. DPT can surpass prompt tuning in most cases, which demonstrates the robustness of DPT. In Figure \ref{b_try_base}, we show the result on T5-Base.

\begin{figure*}[t]
\centering{
    \subfloat[]{
    \includegraphics[width=0.24\linewidth]{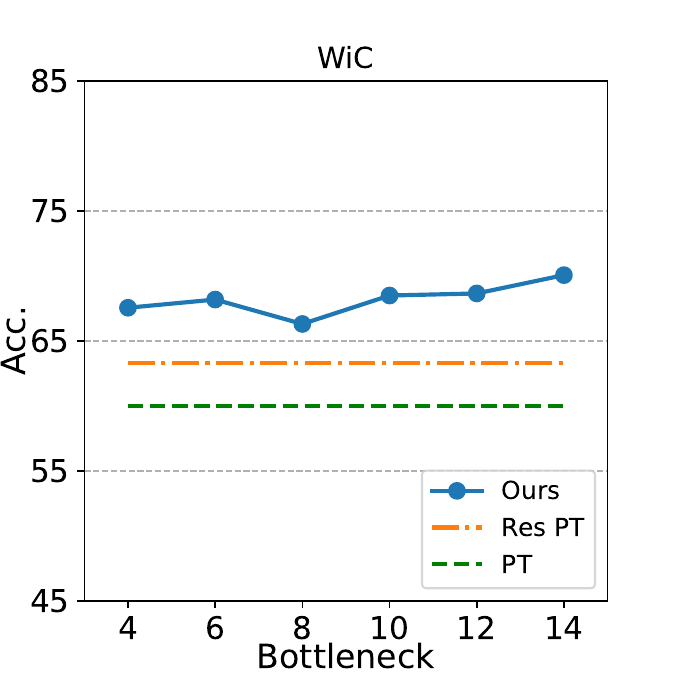}}
    \hfill
    \subfloat[]{
    \includegraphics[width=0.24\linewidth]{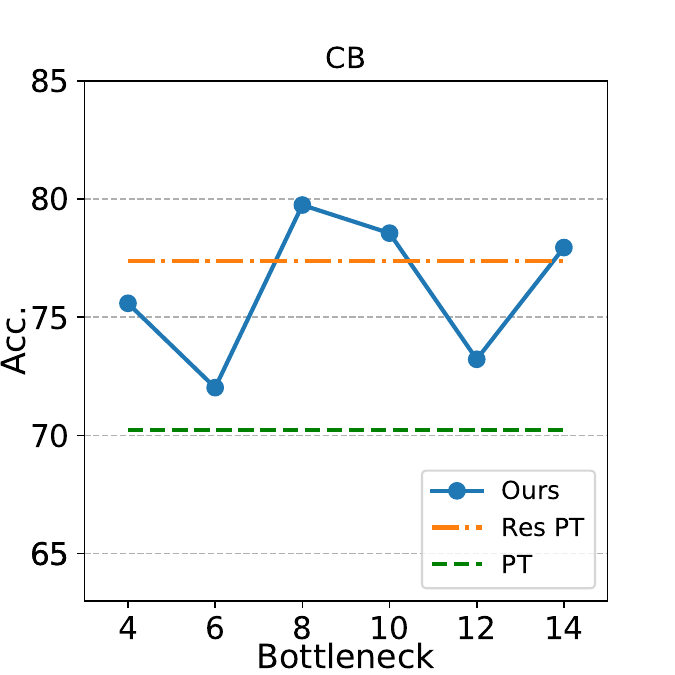}}
    \hfill
    \subfloat[]{
    \includegraphics[width=0.24\linewidth]{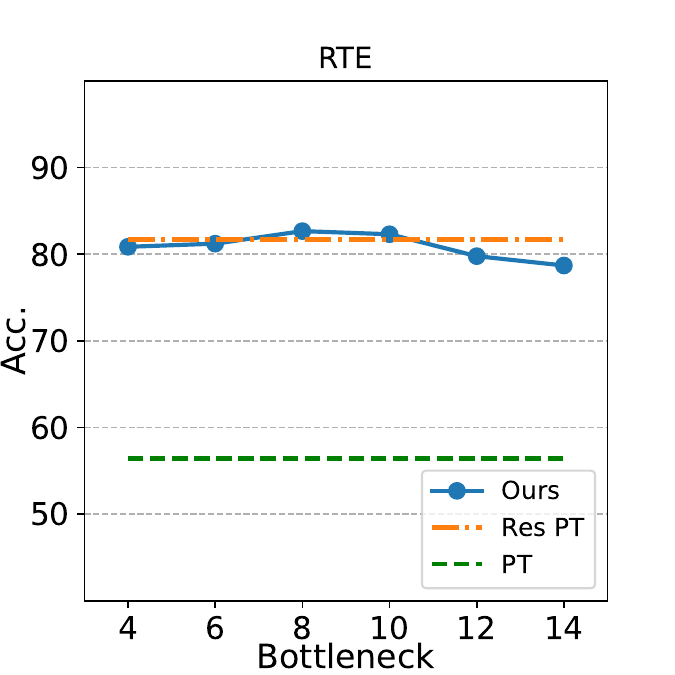}}
    \hfill
    \subfloat[]{
    \includegraphics[width=0.24\linewidth]{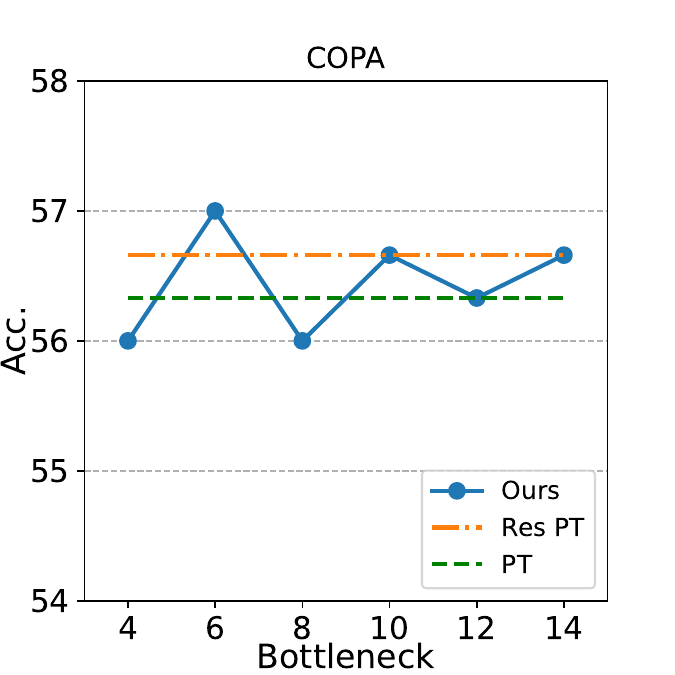}}
    }
    \caption{Comparisons of DPT with PT and Res PT with different sizes of the bottleneck $b$. Each point represents the average of three runs with different random seeds. }
    \label{b_try_base}
\end{figure*}

\subsection{Various Prompt Length} \label{fix_bot}
We try prompt length 20, 100 and 200 and fix the bottleneck as 10. The result of T5-Base is similar to that of T5-Large. The performance of DPT is consistently better than PT under various prompt lengths as shown in Figure \ref{baselen}.

\begin{figure*}[ht]
    \centering{
    \subfloat[]{
    \includegraphics[width=0.23\linewidth]{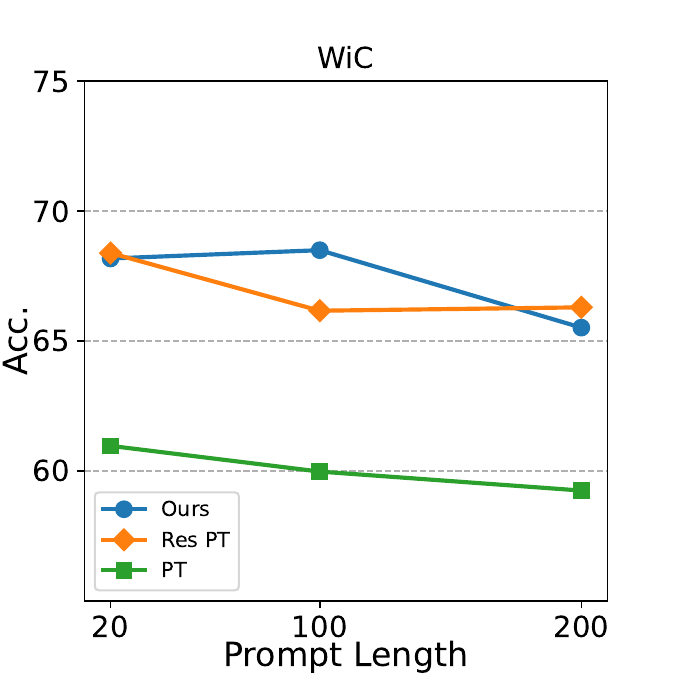}}
    \hfill
    \subfloat[]{
    \includegraphics[width=0.23\linewidth]{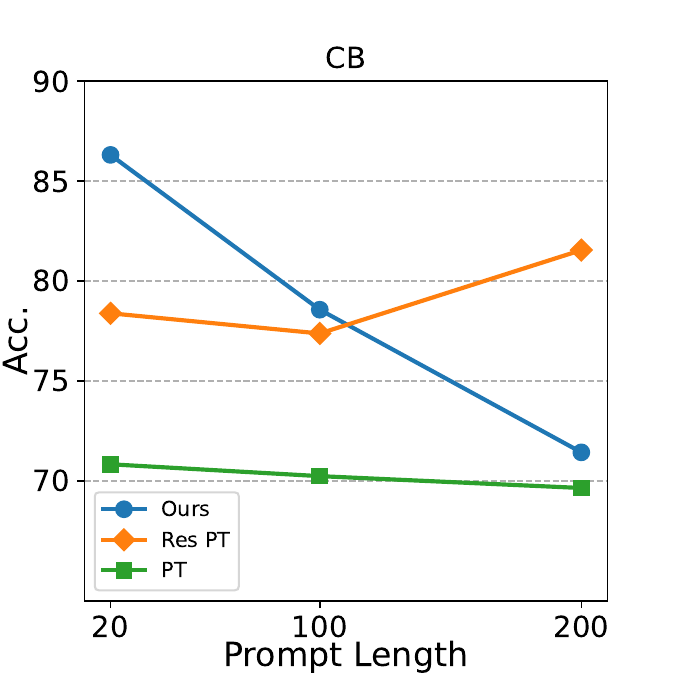}}
    \hfill
    \subfloat[]{
    \includegraphics[width=0.23\linewidth]{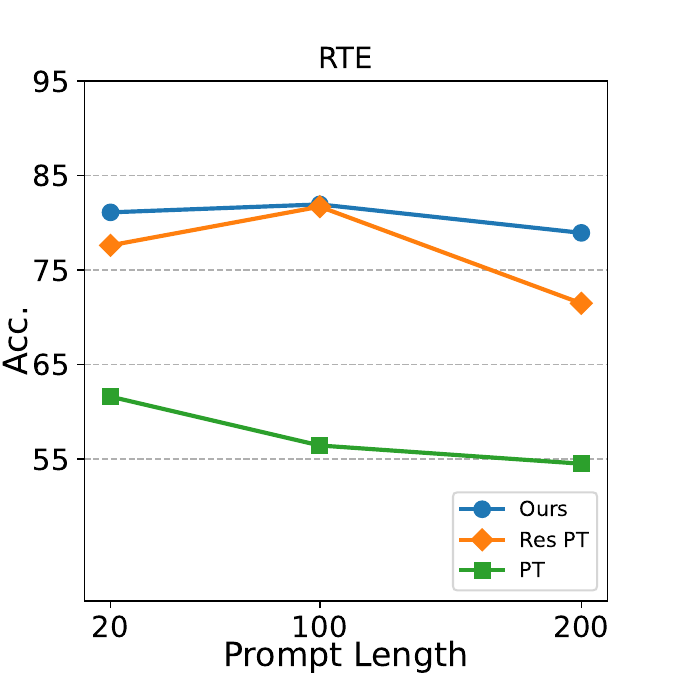}}
    \hfill
    \subfloat[]{
    \includegraphics[width=0.23\linewidth]{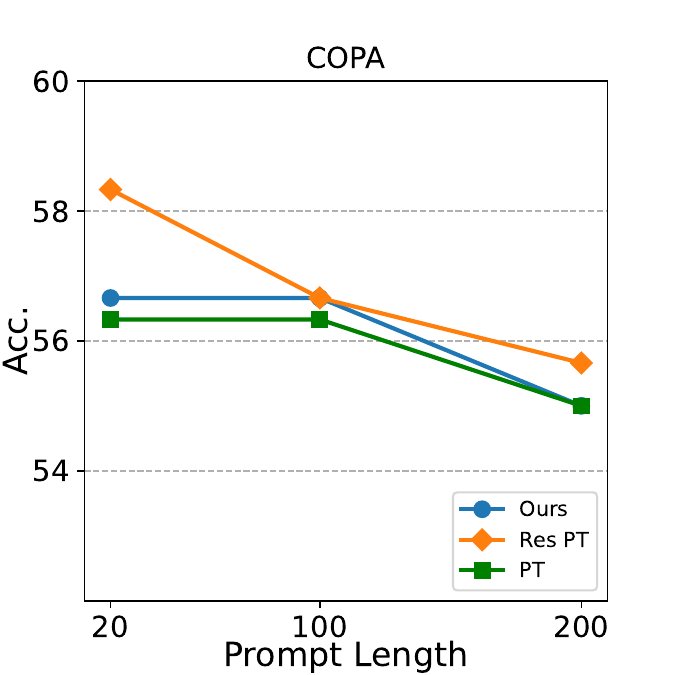}}
    }
    \caption{The comparison between our method and prompt tuning of different length in $\{20, 100, 200\}$. Each point in the figure is average of three runs with random seeds. The backbone is T5-Base.}
    \label{baselen}
\end{figure*}

\subsection{Short Prompt} 
\label{bar_base}

\begin{figure}[t]
\centering{
    \subfloat[]{
    \includegraphics[width=0.5\linewidth]{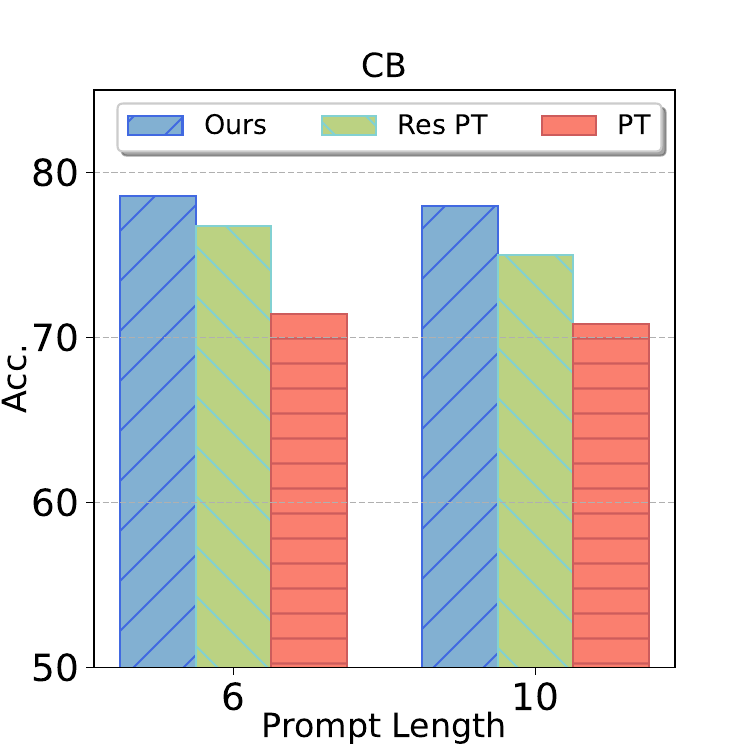}}
    \subfloat[]{
    \includegraphics[width=0.5\linewidth]{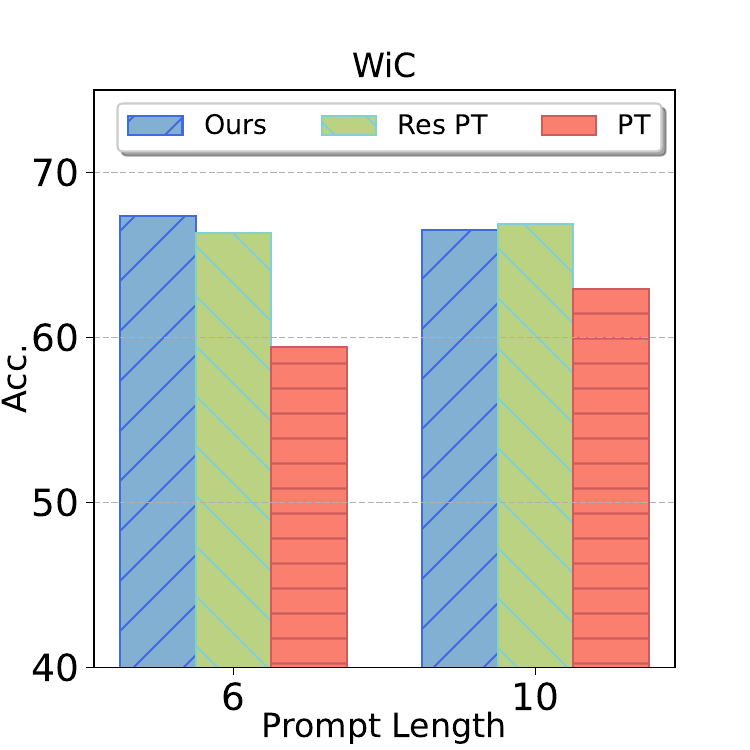}}
    }
    \caption{The bottleneck of DPT is 2, while prompt length can be 6 and 10. In all cases, DPT  outperform prompt tuning (PT). The backbone is T5-base.}
    \label{bar_base_image}
\end{figure}

We experiment on WiC dataset and CB dataset to explore that if DPT can work under a very short prompt. We fix the bottleneck as 2, and prompt length can be 6 and 10. We observe that DPT can even work when the soft prompt is extremely short on T5-Base in Figure \ref{bar_base_image}.

\subsection{Bottleneck vs. Number of Trainable Parameters} \label{unconstrained}
When we enlarge the bottleneck to explore the performance of DPT, we find that an extremely large bottleneck will harm the stability and performance of DPT. The result is consistent when we conduct experiments on both T5-Large and T5-Base. In Figure \ref{largeb_t5_base_appendix} and \ref{largeb_appendix}, we show the results on RTE dataset.

\begin{figure}[t] 
\centering
\includegraphics[width=0.7\linewidth]{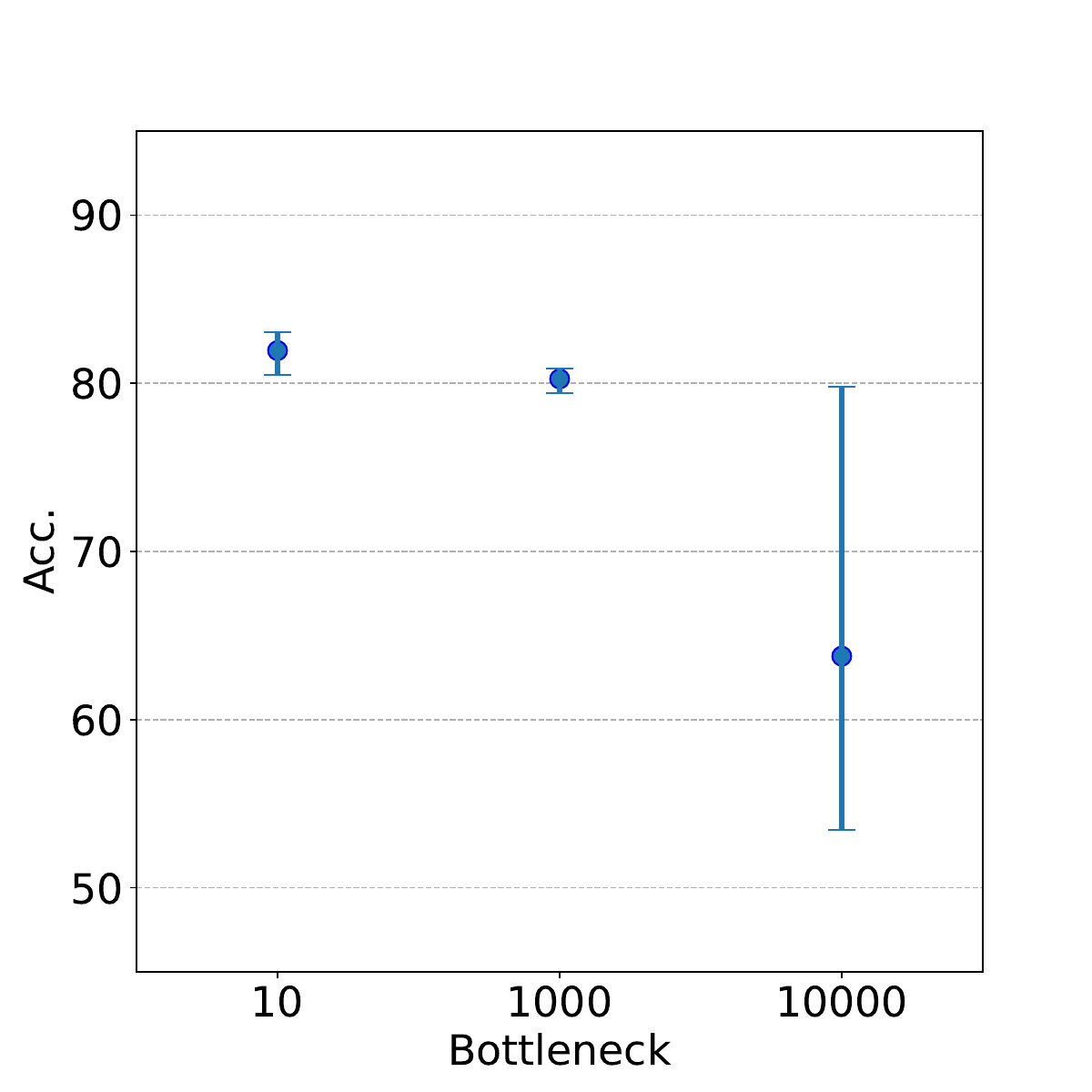}
\caption{The model is T5-Base and dataset is RTE.}
\label{largeb_t5_base_appendix}
\end{figure}

\begin{figure}[t] 
\centering
\includegraphics[width=0.7\linewidth]{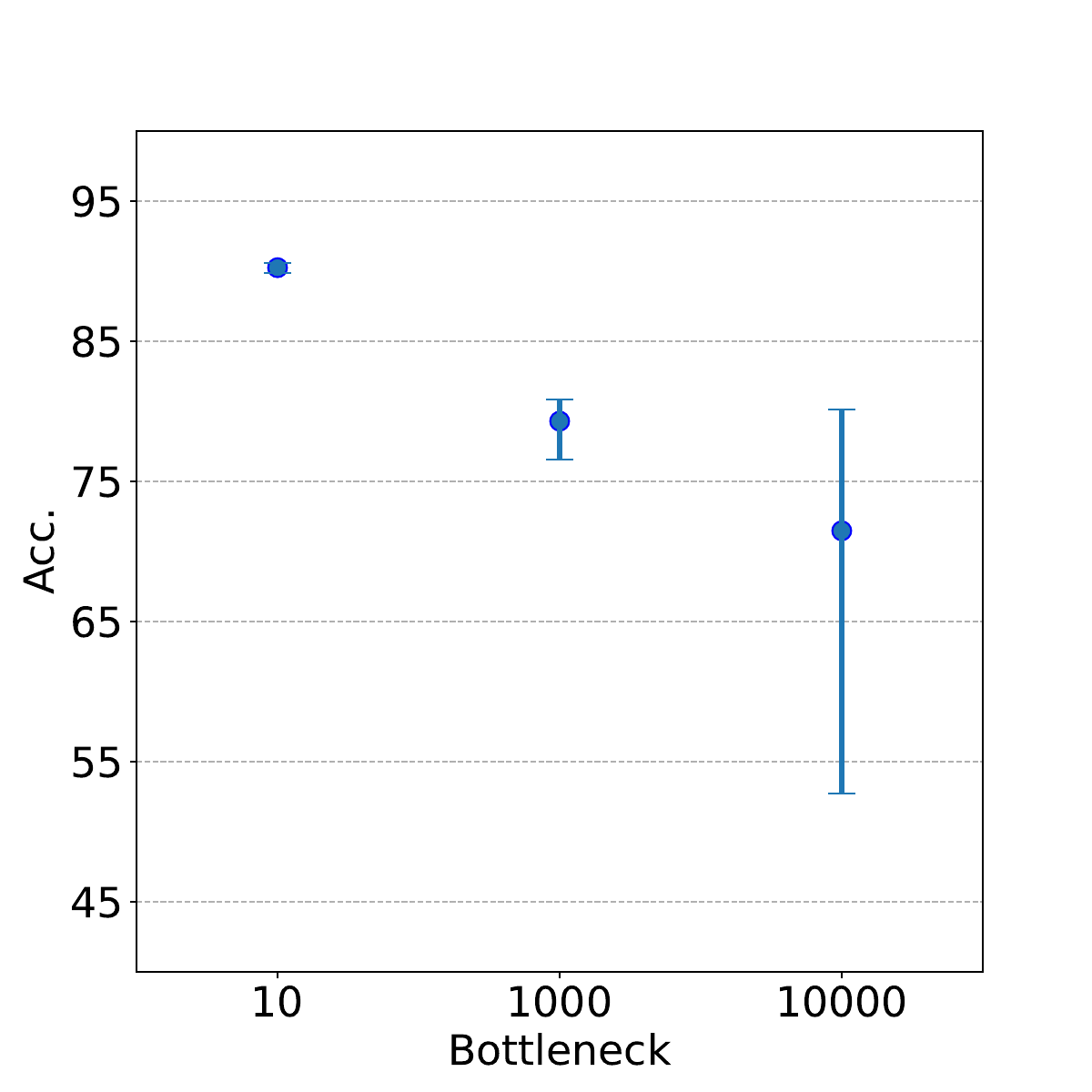}
\caption{The model is T5-Large and dataset is RTE.}
\label{largeb_appendix}
\end{figure}

\end{document}